\algnewcommand{\IIf}[1]{\State\algorithmicif\ #1\ \algorithmicthen}
\algnewcommand{\EndIIf}{\unskip\ \algorithmicend\ \algorithmicif}
\crefname{property}{Property}{Properties}
\spnewtheorem{assumption}[definition]{Assumption}{\bfseries}{\itshape}
\crefname{assumption}{Assumption}{Assumptions}
\newcommand{\regret}{\textup{Reg}}
\newcommand{\dualgap}{\textup{D-Gap}}
\newcommand{\neregret}{\textup{NE-Reg}}
\newcommand{\term}[1]{(\theequation\text{#1})}
\newcommand{\termref}[2]{\text{Equation~(\hyperref[#1]{\ref{#1}#2})}}
\newcommand{\symit}{\mathit}
\newcommand{\citet}{\cite}
\newcommand{\citep}{\cite}
\begin{document}
\title{Proximal Point Method for Online Saddle Point Problem}
%
%\titlerunning{Abbreviated paper title}
% If the paper title is too long for the running head, you can set
% an abbreviated paper title here
%
\author{Qing-xin Meng\,\orcidlink{0000-0003-4014-7405} \and
Jian-wei Liu\thanks{Corresponding author}}
\authorrunning{Q. Meng and J. Liu}
% First names are abbreviated in the running head.
% If there are more than two authors, 'et al.' is used.
%
\institute{Department of Automation, College of Artificial Intelligence\\
China University of Petroleum, Beijing, China\\
\email{qingxin6174@gmail.com, liujw@cup.edu.cn}}
\maketitle              % typeset the header of the contribution
\begin{abstract}
This paper focuses on the online saddle point problem, which involves a sequence of two-player time-varying convex-concave games. Considering the nonstationarity of the environment, we adopt the duality gap and the dynamic Nash equilibrium regret as performance metrics for algorithm design. We present three variants of the proximal point method: the Online Proximal Point Method (OPPM), the Optimistic OPPM~(OptOPPM), and the OptOPPM with multiple predictors. Each algorithm guarantees upper bounds for both the duality gap and dynamic Nash equilibrium regret, achieving near-optimality when measured against the duality gap. Specifically, in certain benign environments, such as sequences of stationary payoff functions, these algorithms maintain a nearly constant metric bound. Experimental results further validate the effectiveness of these algorithms. Lastly, this paper discusses potential reliability concerns associated with using dynamic Nash equilibrium regret as a performance metric. 
The technical appendix and code can be found at \cite{2024arXiv240704591M} and \url{https://github.com/qingxin6174/PPM-for-OSP}.

\keywords{Online Saddle Point Problem \and
Proximal Point Method \and
Duality Gap \and
Multiple Predictors.}
\end{abstract}

\section{Introduction}

The Online Saddle Point~(OSP) problem, initially introduced by \citet{cardoso2018online}, involves a sequence of two-player time-varying convex-concave games. 
In round $t$, Players~1 and~2 \emph{jointly} select a strategy pair $(x_t,y_t)\in X\times Y$. Here, Player~1 minimizes his payoff, while Player~2 maximizes his payoff. 
Both players make decisions without prior knowledge of the current or future payoff functions. 
Upon finalizing the strategy pair, the environment reveals a continuous payoff function $f_t\colon X\times Y\rightarrow\mathbb{R}$, which satisfies the following conditions: $\forall y\in Y$, $f_t\left(\,\cdot\,,y\right)$ is convex on $X$, and $\forall x \in X$, $f_t\left(x, \,\cdot\,\right)$ is concave on $Y$. 
No additional assumptions are imposed on the environment, thereby allowing potential regularity or even adversarial behavior.
The goal is to provide players with decision-making algorithms that approximate Nash equilibria, ensuring that the players' decisions in most rounds are close to saddle points. 

Observe that the OSP problem extends the application of Online Convex Optimization~(OCO, \cite{zinkevich2003online}) to include interactions among two players and the environment.  Consequently, it becomes straightforward to identify online scenarios that are well-suited for OSP. Such scenarios include dynamic routing~\cite{Awerbuch2008online,Guo2021Routing} and online advertising auctions~\cite{Lykouris2016learning}, which fall under the broader category of budget-reward trade-offs. 

Given the nonstationarity of the environment, there are two optional metrics available for evaluating the performance of an OSP algorithm: 
\begin{itemize}
\item[1)] The duality gap~(D-Gap, \cite{zhang2022noregret}), which is given by 
\begin{equation}
\label{def:dual-gap}
\begin{aligned}
\dualgap_T\coloneq\sum_{t=1}^T\max_{y\in Y} f_t\left(x_t, y\right)-\sum_{t=1}^T\min_{x\in X}f_t\left(x, y_t\right).
\end{aligned}
\end{equation}
\item[2)] The dynamic Nash equilibrium regret~(NE-Reg, \cite{zhang2022noregret}), which is defined as 
\begin{equation}
\label{def:NE-regret}
\begin{aligned}
\neregret_{T}
\coloneq\left\lvert \sum_{t=1}^T f_t\left(x_t, y_t\right)-\sum_{t=1}^T \max_{y\in Y}\min_{x\in X}f_t\left(x, y\right)\right\rvert.
\end{aligned}
\end{equation}
\end{itemize}

\citet{zhang2022noregret} first presented an algorithm that, under bilinear payoff functions, simultaneously guarantees upper bounds for three performance metrics: the duality gap, dynamic Nash equilibrium regret, and static individual regret. 
In their commendable work, the authors have directed their focus towards the algorithm's adaptability to a spectrum of metrics, attenuating the pursuit of algorithmic optimality. 
Notably, their method yields a duality gap upper bound of $\widetilde{O}\big(\sqrt{T}\big)$ for sequences of stationary payoff functions, whereas we advocate for a sharper bound of $O(1)$ that we believe is optimal in such scenarios. 
Our assertion is inspired by \citet{Campolongo2020Temporal}, who demonstrated that proximal methods can incur $O(1)$-level regret in online convex optimization when dealing with stationary loss function sequences.

\begin{table}[t]
\centering\setlength{\tabcolsep}{1em}
\caption{Summary of our results. In this table, $\widetilde{O}$ hides poly-logarithmic factors, $V_T$ represents the cumulative difference of the convex-concave payoff function series, $V'_T$ denotes the cumulative error of one single predictor, and $V_T^k$ indicates the cumulative error of the $k$-th predictor. }
\label{tab:results}
\begin{tabular}{ll}
\toprule
Algorithm & Upper Bound of $\dualgap_T$ and $\neregret_T$ \\
\midrule
OPPM & $\widetilde{O}\left(\min\left\{ V_T, \sqrt{(1+C_T)T}\right\}\right)$ \\[3pt]
OptOPPM & $\widetilde{O}\left(\min\left\{ V'_T, \sqrt{(1+C_T)T}\right\}\right)$ \\[3pt]
OptOPPM with multiple predictors & $\widetilde{O}\left(\min\left\{ V^1_T, \cdots, V^d_T, \sqrt{(1+C_T)T}\right\}\right)$ \\
%\midrule
\bottomrule
\end{tabular}
\end{table}

\subsubsection{Contributions} In this paper, we propose three variants of the proximal point method for solving the OSP problem: the Online Proximal Point Method~(OPPM), the Optimistic OPPM~(OptOPPM), which allows for an arbitrary predictor, and OptOPPM with multiple predictors, enhancing the algorithm to handle multiple predictors. 
Results are shown in \cref{tab:results}. 
All algorithms are near-optimal, as they achieve a worst-case duality gap upper bound of $\widetilde{O}\big(\sqrt{(1+C_T)T}\big)$. 
In particular, under favorable scenarios such as stationary payoff function sequences, these algorithms attain a sharp bound of $\widetilde{O}(1)$. 
Notably, the OptOPPM with multiple predictors can autonomously select the most effective predictor from a set of $d$ options. 
Even when all predictors underperform, it preserves the worst-case bound, significantly enhancing the algorithm's practical utility. 

It is imperative to highlight that in time-varying games, relying on NE-Reg for performance evaluation of algorithms may lead to concerns over its reliability. 
A sublinear D-Gap suggests an approximation to a coarse correlated equilibrium, while a sublinear NE-Reg does not necessarily imply such a correlation. 
Consider a scenario where, in half of the iterations, the actual payoffs significantly exceed those at the Nash equilibrium, while in the other half, they are substantially lower. 
This results in a small NE-Reg, but it does not guarantee that the strategies in most rounds are close to the Nash equilibrium. 
See \cref{exp:ne-reg} for a more precise discussion. 

\subsubsection{Related Work}
The OSP problem is a time-varying version of the minimax problem. The first minimax theorem was proven by \citet{vNeumann1928Zur}. 
Subsequent to the seminal work of \citet{FREUND199979}, which unveiled the connections between the minimax problem and online learning, there has been a burgeoning interest in the development of no-regret algorithms tailored for static environments~\cite{Anagnostides2022near,DASKALAKIS2015327,daskalakis2021nearoptimal,nguyen2019exploiting,Rakhlin2013Optimization,Syrgkanis2015fast}. 
In recent years, the research focus has expanded to encompass the OSP problem and its various variants~\cite{anagnostides2023on,cardoso2018online,cardoso2019competing,fiez2021online,Roy2019online,zhang2022noregret}. 

\citet{cardoso2018online} were the pioneers in explicitly addressing the OSP problem, introducing the saddle-point regret as $\big\lvert \sum_{t=1}^T f_t\left(x_t, y_t\right)-\min_{x\in X}\max_{y\in Y}\sum_{t=1}^T f_t\left(x,y\right)\big\rvert$. 
%\begin{equation*}
%\begin{aligned}
%\left\lvert \sum_{t=1}^T f_t\left(x_t, y_t\right)-\min_{x\in X}\max_{y\in Y}\sum_{t=1}^T f_t\left(x,y\right)\right\rvert. 
%\end{aligned}
%\end{equation*}
In a subsequent work, \citet{cardoso2019competing} redefined saddle-point regret as Nash equilibrium regret and developed an FTRL-like algorithm capable of achieving a Nash equilibrium regret bound of $\widetilde{O}(\sqrt{T})$. 
Moreover, they proved that it is impossible for any algorithm to simultaneously attain sublinear Nash equilibrium regret and sublinear individual regret for both players. 
Building on these findings, \citet{zhang2022noregret} refined the notion of dynamic Nash equilibrium regret by reassessing the Nash equilibrium regret, moving the minimax operation inside the summation, and delineating it as \cref{def:NE-regret}. 
They also proposed an algorithm predicated on the meta-expert framework, which ensures upper bounds for three performance metrics: the duality gap, dynamic Nash equilibrium regret, and static individual regret, effectively covering beliefs from stationary to highly nonstationary. 

In contrast to the approach by \citet{zhang2022noregret}, which targets a broad spectrum of nonstationarity levels, this paper take beliefs that the environment exhibits nonstationarity and accordingly designs algorithms to achieve near-optimal performance. 
Moreover, this study underscores the potential issues with the reliability of dynamic Nash equilibrium regret as a metric for evaluating algorithmic performance in time-varying games.

\section{Preliminaries}

%%%%%%%%%%%%%%%%%%%%%%%%%%%%%%%%%%%%%%%%
Throughout this paper, we define $(\,\cdot\,)_+\coloneq\max(\,\cdot\,,0)$, and use the abbreviated notation $1\!:\!n$ to represent $1,2,\cdots,n$. 
For asymptotic upper bounds, we employ big $O$ notation, while $\widetilde{O}$ is utilized to hide poly-logarithmic factors. 
Denote by $\langle\,\cdot\,,\cdot\,\rangle\colon \mathscr{X}^*\times \mathscr{X}\rightarrow\mathbb{R}$ the canonical dual pairing, where $\mathscr{X}^*$ represents the dual space of $\mathscr{X}$. 
%%%%%%%%%%%%%%%%%%%%%%%%%%%%%%%%%%%%%%%%

%%%%%%%%%%%%%%%%%%%%%%%%%%%%%%%%%%%%%%%%
The Fenchel coupling~\cite{Mertikopoulos2016Learning,2016arXiv160807310M} induced by a proper function $\varphi$ is defined as %$B_{\varphi}\left(x, z\right)\coloneq\varphi\left(x\right)+\varphi^\star\left(z\right)-\left\langle z, x\right\rangle$, $\forall\left(x, z\right)\in \mathscr{X}\times \mathscr{X}^*$, 
\begin{equation*}
B_{\varphi}\left(x, z\right)\coloneq\varphi\left(x\right)+\varphi^\star\left(z\right)-\left\langle z, x\right\rangle, \quad\forall\left(x, z\right)\in \mathscr{X}\times \mathscr{X}^*, 
\end{equation*}
where $\varphi^\star$ represents the convex conjugate of $\varphi$ given by $\varphi^\star\left(z\right)\coloneq\sup_{x\in \mathscr{X}} \langle z, x\rangle-\varphi\left(x\right)$. 
Fenchel coupling is the general form of Bregman divergence. 
By Fenchel-Young inequality, $B_{\varphi}\left(x, z\right)\geq 0$, and equality holds iff $z$ is the subgradient of $\varphi$ at $x$. 
We use the superscripted notation $x^\varphi$ to abbreviate one subgradient of $\varphi$ at $x$. 
Directly applying the definition of Fenchel coupling yields $B_{\varphi}\left(x, y^\varphi\right)+B_{\varphi}\left(y, z\right)-B_{\varphi}\left(x, z\right)=\left\langle z-y^\varphi, x-y\right\rangle$. 
A similar version can be found in \cite{chen1993convergence}. 

$\varphi$ is $\mu$-strongly convex if 
\begin{equation*}
B_{\varphi}\left(x, y^\varphi\right)\geq\frac{\mu}{2}\left\lVert x-y\right\rVert^2,\qquad\forall x\in \mathscr{X},\ \forall y^\varphi\in\partial\varphi\left(y\right). 
\end{equation*}
%$\forall x\in \mathscr{X}$, $\forall y^\varphi\in\partial\varphi\left(y\right)$: 
%$B_{\varphi}\left(x, y^\varphi\right)\geq\frac{\mu}{2}\left\lVert x-y\right\rVert^2$. 
%$B_\varphi$ is $L$-Lipschitz with respect to its first variable, if $\exists L$, such that $\forall y\in\mathscr{X}$, $\forall y^\varphi\in\partial\varphi(y)$, and $\forall x_1, x_2\in\mathscr{X}$: 
%\begin{equation*}
%$\left\lvert B_{\varphi}\left(x_1, y^\varphi\right)-B_{\varphi}\left(x_2, y^\varphi\right)\right\rvert\leq L\left\lVert x_1-x_2\right\rVert$.
%\end{equation*}
%%%%%%%%%%%%%%%%%%%%%%%%%%%%%%%%%%%%%%%%

%%%%%%%%%%%%%%%%%%%%%%%%%%%%%%%%%%%%%%

\section{Main Results}

This section begins by outlining the assumptions related to the OSP problem. 
It then evaluates performance metrics and establishes the lower bound for the duality gap, a crucial step in analyzing algorithmic optimality. 
Finally, this section introduces three variants of the proximal point method: the Online Proximal Point Method~(OPPM), the Optimistic OPPM~(OptOPPM) that incorporates an arbitrary predictor, and a variant of OptOPPM designed to accommodate multiple predictors, thereby enhancing the algorithm's practical utility. 
Refer to \cite{2024arXiv240704591M} 
%\url{https://github.com/qingxin6174/PPM-for-OSP} 
for all theorem proofs. 

\subsection{Assumptions}

Let $\big(\mathscr{X}, \lVert\cdot\rVert_\mathscr{X}\big)$ and $\big(\mathscr{Y}, \lVert\cdot\rVert_\mathscr{Y}\big)$ be normed vector spaces. 
We can omit norm subscripts without ambiguity when the norm is determined by the space to which the element belongs. 
Let $X\subset\mathscr{X}$ and $Y\subset\mathscr{Y}$. 
Now we introduce the following two assumptions: 
%
%\begin{itemize}%[itemsep=1pt,topsep=0pt]
%\item[\cref{ass:X-Y-bounded}]
\begin{assumption}
\label{ass:X-Y-bounded}
The feasible sets $X$ and~\,$Y$ are both compact and convex, with the diameter of~\,$X$ denoted as $D_X$, and the diameter of~\,$Y$ as $D_Y$.
\end{assumption}
%
%\item[\cref{ass:2-subgradient-bounded}]
\begin{assumption}
\label{ass:2-subgradient-bounded}
All payoff functions satisfy the boundedness of subdifferentiation, i.e., $\exists G_X$, $G_Y<+\infty$, $\forall x\in X$, $\forall y\in Y$, $\forall t$, $\left\lVert \partial_x f_t\left(x,y\right)\right\rVert\leq G_X$, $\lVert \partial_y (-f_t)\left(x,y\right)\rVert\leq G_Y$. 
\end{assumption}
%\end{itemize}
%
In accordance with Theorem~3 by \citet{kakutani1941generalization}, \cref{ass:X-Y-bounded} ensures the existence of a saddle point. Specifically, there exists a pair $\left(x_t^*, y_t^*\right)\in X\times Y$ such that for all $x\in X$ and $y\in Y$: 
%\begin{equation*}
%\begin{aligned}
$f_t(x_t^*, y)\leq f_t(x_t^*, y_t^*)\leq f_t(x, y_t^*)$. 
%\end{aligned}
%\end{equation*}
One may write 
%\begin{equation*}
%\begin{aligned}
$(x_t^*, y_t^*)=\arg\min_{x\in X}\max_{y\in Y}f_t(x, y)$.
%\end{aligned}
%\end{equation*}
%The following assumption constrains the first-order boundedness of the payoff functions. 

%
\subsection{Discussion on Performance Metrics}

This subsection first confirms that the individual regret~(Reg) can guarantee both the D-Gap and NE-Reg, and then proceeds to demonstrate the intrinsic issues associated with NE-Reg. 
Prior to this, let $x'_t=\arg\min_{x\in X}f_t\left(x, y_t\right)$ and $y'_t=\arg\max_{y\in Y} f_t\left(x_t, y\right)$. 
%\begin{equation*}
%\begin{aligned}
%x'_t=\arg\min_{x\in X}f_t\left(x, y_t\right),\quad y'_t=\arg\max_{y\in Y} f_t\left(x_t, y\right).
%\end{aligned}
%\end{equation*}
Individual regrets of Players~1 and~2 are defined as:
\begin{equation*}
\begin{aligned}
\regret^1_T\coloneq\sum_{t=1}^T f_t\left(x_t, y_t\right)-\sum_{t=1}^T f_t\left(x'_t, y_t\right), \quad
\regret^2_T\coloneq\sum_{t=1}^T f_t\left(x_t, y'_t\right)-\sum_{t=1}^T f_t\left(x_t, y_t\right).
\end{aligned}
\end{equation*}

$\dualgap_T=\regret^1_T+\regret^2_T$ implies the following lemma. 
\begin{lemma}
\label{lem:reg-dualgap}
If an online algorithm guarantees individual regrets for both players, it also ensures a $\dualgap$ guarantee. 
\end{lemma}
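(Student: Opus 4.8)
The plan is to turn the identity $\dualgap_T = \regret^1_T + \regret^2_T$, already announced just above the statement, into a short formal argument and then read off the guarantee. First I would record that, by \cref{ass:X-Y-bounded} together with the continuity of each $f_t$, the points $x'_t=\arg\min_{x\in X}f_t\left(x, y_t\right)$ and $y'_t=\arg\max_{y\in Y}f_t\left(x_t, y\right)$ are attained, so that
\begin{equation*}
\sum_{t=1}^T f_t\left(x'_t, y_t\right)=\sum_{t=1}^T\min_{x\in X}f_t\left(x, y_t\right), \qquad \sum_{t=1}^T f_t\left(x_t, y'_t\right)=\sum_{t=1}^T\max_{y\in Y}f_t\left(x_t, y\right).
\end{equation*}

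Next I would simply add the defining expressions of $\regret^1_T$ and $\regret^2_T$; the two occurrences of $\sum_{t=1}^T f_t\left(x_t, y_t\right)$ appear with opposite signs and cancel, leaving
\begin{equation*}
\regret^1_T+\regret^2_T=\sum_{t=1}^T\max_{y\in Y}f_t\left(x_t, y\right)-\sum_{t=1}^T\min_{x\in X}f_t\left(x, y_t\right)=\dualgap_T,
\end{equation*}
which is precisely \eqref{def:dual-gap}. Consequently, if the algorithm enjoys individual-regret bounds $\regret^1_T\leq R_1(T)$ and $\regret^2_T\leq R_2(T)$, then $\dualgap_T\leq R_1(T)+R_2(T)$; in particular a sublinear individual-regret guarantee for both players immediately yields a sublinear D-Gap guarantee, which is the assertion of the lemma.

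There is essentially no obstacle here: the entire content is the bookkeeping identity above. The only point worth a remark in the write-up is that $\regret^1_T$ and $\regret^2_T$ are each nonnegative by the optimality defining $x'_t$ and $y'_t$ (indeed $\max_{y\in Y}f_t(x_t,y)\geq f_t(x_t,y_t)\geq\min_{x\in X}f_t(x,y_t)$), so $\dualgap_T\geq 0$ automatically and a one-sided upper bound is all a "D-Gap guarantee" needs to provide — no absolute value is required, in contrast with $\neregret_T$ in \eqref{def:NE-regret}.
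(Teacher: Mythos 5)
Your argument is exactly the paper's: the paper proves the lemma by the one-line identity $\dualgap_T=\regret^1_T+\regret^2_T$, which you derive by adding the two regret definitions and cancelling $\sum_t f_t(x_t,y_t)$. Your extra remarks on attainment of the argmin/argmax and on nonnegativity are correct but not needed beyond what the paper assumes.
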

Considering that two individual regrets provide the same evaluative utility as the D-Gap, we choose to use only the D-Gap as the metric to assess an online algorithm's performance. 

$\neregret_T\leq\dualgap_T$ (Proposition~11 of \cite{zhang2022noregret}) deduces the following lemma. 
\begin{lemma}
\label{lem:reg-nereg}
If an online algorithm offers individual regret guarantees for both players, it also ensures a guarantee of $\neregret$.
\end{lemma}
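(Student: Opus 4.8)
The plan is to chain together two facts that are, respectively, a trivial algebraic identity and an already-cited inequality. The statement to prove is \cref{lem:reg-nereg}: if an online algorithm guarantees sublinear individual regrets $\regret^1_T$ and $\regret^2_T$ for both players, then it guarantees a bound on $\neregret_T$. Since the excerpt already records that $\dualgap_T = \regret^1_T + \regret^2_T$ and that $\neregret_T \le \dualgap_T$, the proof is essentially a one-liner: combine the two to get $\neregret_T \le \regret^1_T + \regret^2_T$, so any bound on the two individual regrets transfers directly to $\neregret_T$.

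Concretely, first I would recall the identity $\dualgap_T = \regret^1_T + \regret^2_T$. This follows immediately by writing out the definitions: $\regret^1_T = \sum_t f_t(x_t,y_t) - \sum_t \min_{x\in X} f_t(x,y_t)$ and $\regret^2_T = \sum_t \max_{y\in Y} f_t(x_t,y) - \sum_t f_t(x_t,y_t)$, and adding them the two occurrences of $\sum_t f_t(x_t,y_t)$ cancel, leaving exactly $\dualgap_T$ as in \cref{def:dual-gap}. Second, I would invoke $\neregret_T \le \dualgap_T$, which is Proposition~11 of \cite{zhang2022noregret} (one can also verify it directly: for each $t$, $\min_x f_t(x,y_t) \le f_t(x_t^*, y_t^*) \le \max_y f_t(x_t,y)$ sandwiches the per-round Nash value, so both the positive and negative parts inside the absolute value in \cref{def:NE-regret} are dominated by the corresponding sums in $\dualgap_T$). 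Chaining, $\neregret_T \le \dualgap_T = \regret^1_T + \regret^2_T$, and if each $\regret^i_T = o(T)$ (or more generally admits an explicit bound), then so does $\neregret_T$.

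There is essentially no obstacle here; the only mild subtlety worth a sentence is making sure the absolute value in the definition of $\neregret_T$ is handled correctly, i.e.\ that $\neregret_T \le \dualgap_T$ bounds both signs. This is exactly what the per-round sandwiching above delivers, so I would just state it and cite \cite{zhang2022noregret} for the detailed argument. The lemma is a bookkeeping consequence of the preceding two displayed facts, and the proof should be presented as such.
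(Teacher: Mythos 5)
Your argument is exactly the one the paper intends: chain the identity $\dualgap_T = \regret^1_T + \regret^2_T$ (used for \cref{lem:reg-dualgap}) with $\neregret_T \le \dualgap_T$ (Proposition~11 of \cite{zhang2022noregret}). This matches the paper's one-line justification, and your per-round sandwich remark is a correct way to see why the cited inequality holds.
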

It is noteworthy that using NE-Reg as a performance metric could raise questions about its reliability. 
The following example illustrates that the environment may intentionally mislead players, resulting in an internal cancellation of the absolute value of NE-Reg.
\begin{example}
\label{exp:ne-reg}
Consider the OSP problem defined on the feasible domain $[-1,1]^2$. 
In round $t$, the environment feeds back the payoff function of $f_t(x,y)=(x-x_t^*)^2-(y-y_t^*)^2$, where 
\begin{equation*}
\begin{aligned}
x_t^*=x_t+\frac{(-1)^t+1}{2}(2\chi_{x_t<0}-1), \qquad
y_t^*=y_t-\frac{(-1)^t-1}{2}(2\chi_{y_t<0}-1),
\end{aligned}
\end{equation*}
$\chi_A$ is the 0\,/\,1 indicator function with $\chi_A=1$ if and only if $A$ is true. 
Regardless of the algorithm used, the players' strategy pair will never approach the saddle point, but $\neregret_T=\left|\sum_{t=1}^T(-1)^t\right|\leq 1$. 
%\begin{equation*}
%\begin{aligned}
%\neregret_T=\left|\sum_{t=1}^T(-1)^t\right|\leq 1,\quad\forall T.
%\end{aligned}
%\end{equation*}
\end{example}
Although Remark~2 in \citet{zhang2022noregret} pointed out that performance metrics based on function values inherently possess certain limitations, the weakness of NE-Reg is easier to trigger.

\subsection{Duality Gap Lower Bound}

Given the reliability concerns of $\neregret$, this subsection only demonstrates the lower bound for the $\dualgap$. 

%The following theorem states that the duality gap has a lower bound. 
\begin{theorem}[Duality Gap Lower Bound]
\label{thm:lower-bound}
Regardless of the strategy pairs adopted by the two players, there always exist a sequence of convex-concave payoff functions $f_{1:T}$ that adhere to \cref{ass:X-Y-bounded,ass:2-subgradient-bounded}, ensuring a duality gap of at least $\symit{\Omega}\big(\sqrt{(1+C_T)T}\big)$, where $C_T=\sum_{t=1}^T\bigl(\left\lVert x'_t-x'_{t-1}\right\rVert+\left\lVert y'_t-y'_{t-1}\right\rVert\bigr)$. 
%\begin{equation*}
%\begin{aligned}
%C_T=\sum_{t=1}^T\bigl(\left\lVert x'_t-x'_{t-1}\right\rVert+\left\lVert y'_t-y'_{t-1}\right\rVert\bigr). 
%\end{aligned}
%\end{equation*}
\end{theorem}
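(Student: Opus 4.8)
The plan is to reduce the lower bound to two separate adversarial constructions and then combine them. The first construction handles the term $\sqrt{T}$ (the "static" part), exploiting the fact that even a single stationary convex-concave game forces a $\sqrt{T}$-scale duality gap because the players must commit to $(x_t,y_t)$ before seeing $f_t$. The second construction handles the $\sqrt{C_T\,T}$ term by letting the environment shift the location of the saddle point over time, so that the comparators $x'_t,y'_t$ themselves move, generating the path length $C_T$ while simultaneously punishing the players' lagged responses. I would then show that for any target value of $C_T$ in a suitable range, one of these two constructions certifies a duality gap of order $\sqrt{(1+C_T)T}$, which gives the claimed $\symit{\Omega}(\sqrt{(1+C_T)T})$.

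First I would recall the classical OCO lower bound argument. Work on, say, $X=Y=[-D/2,D/2]$ (or a product thereof) and take $f_t(x,y)=\langle g_t, x\rangle - \langle h_t, y\rangle$ with $g_t,h_t$ i.i.d.\ Rademacher-type sign vectors scaled by $G_X,G_Y$. For any fixed strategy sequence chosen by the players (even adaptively, since $f_t$ is revealed only after $(x_t,y_t)$), $\mathbb{E}\big[\sum_t \langle g_t,x_t\rangle\big]=0$ while $\mathbb{E}\big[\min_{x}\sum_t\langle g_t,x\rangle\big] = -\Omega(D_X G_X\sqrt{T})$, and symmetrically for the $y$-player; hence $\mathbb{E}[\dualgap_T]=\Omega(\sqrt{T})$, so some realization achieves it. In this construction $x'_t$ is a vertex of $X$ determined by the sign of the running sum, so $C_T$ is $O(1)$ in expectation up to lower-order fluctuations — this is the regime $C_T=O(1)$, giving $\Omega(\sqrt{T})=\Omega(\sqrt{(1+C_T)T})$.

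Next I would handle larger $C_T$. Partition the horizon into $K$ blocks of length $T/K$. Within each block run the stationary sign-game construction above, but between blocks relocate the saddle point (equivalently, pick the constant offsets in $f_t$) so that the optimal comparator jumps by a controlled distance $\delta$; this injects $C_T \approx K\delta$ while within each block the argument above still yields a duality gap of $\Omega(\sqrt{T/K})$ per block for the right realization, summing to $\Omega(K\sqrt{T/K})=\Omega(\sqrt{KT})$. Choosing $K\asymp C_T$ (with $\delta$ a fixed constant tied to $D_X,D_Y$) then yields $\Omega(\sqrt{C_T T})$, and combining with the $C_T=O(1)$ case gives $\Omega(\sqrt{(1+C_T)T})$ uniformly. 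One must check the shifts are compatible with Assumptions~\ref{ass:X-Y-bounded} and~\ref{ass:2-subgradient-bounded} (bounded domain, bounded subgradients) — keeping the payoffs bilinear-plus-bounded-linear with coefficients capped by $G_X,G_Y$ takes care of this, and the domain bound caps $\delta\le D_X$ (resp.\ $D_Y$), so $C_T$ can range up to $\Theta(T)$.

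The main obstacle I anticipate is the block-relocation step: I need the inter-block jumps of the comparator to contribute exactly the intended $C_T$ \emph{without} allowing the players to "coast" — i.e., the construction must remain adversarial against players who anticipate the shift pattern. The cleanest fix is to keep the sign vectors $g_t,h_t$ random and independent across all rounds (so anticipation of the \emph{shift} structure is useless against the stochastic part), while making the shifts deterministic but of magnitude large enough that no fixed point can be near-optimal in two consecutive blocks; then a Yao-type / averaging argument over the random signs gives the per-block bound, and the deterministic shifts give $C_T$. A secondary technical point is verifying that the auxiliary comparator path length $C_T$ computed from $x'_t=\arg\min_x f_t(x,y_t)$ (which depends on the players' own $y_t$) is genuinely $\Theta(K\delta)$ and not accidentally collapsed; since in the bilinear construction $x'_t$ is a domain vertex determined solely by the sign of $g_t$ plus the current offset, its movement is dominated by the offset jumps, so this is controllable but needs a short argument.
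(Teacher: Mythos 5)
Your high-level reduction matches the paper's: choose a separable payoff so the duality gap decomposes into two independent dynamic-regret quantities, one per player, then invoke a dynamic-regret lower bound for each. The paper carries this out by writing $f_t(x,y)=\alpha_t(x)-\beta_t(y)$, observing that $\dualgap_T=\max_{u_{1:T},v_{1:T}}\sum_t(f_t(x_t,v_t)-f_t(u_t,y_t))$, restricting the maximization to comparators with path budgets $P$ and $C-P$, splitting the budget, and citing the known OCO dynamic-regret lower bound (Theorem~2 of Zhang et al.~2018). You instead re-derive that lower bound from scratch via i.i.d.\ Rademacher signs plus a block/offset construction; that is precisely how the cited lemma is itself proved, so you are unpacking the black box rather than replacing it, which makes your plan heavier without buying generality.

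There is also a genuine error in your $\sqrt{T}$ base case. You claim that with $f_t(x,y)=\langle g_t,x\rangle-\langle h_t,y\rangle$ and i.i.d.\ Rademacher signs, $x'_t$ is ``a vertex determined by the sign of the running sum,'' so $C_T=O(1)$, certifying $\Omega(\sqrt{T})=\Omega(\sqrt{(1+C_T)T})$. But the theorem's $x'_t$ is the \emph{per-round} best response $\arg\min_{x\in X}f_t(x,y_t)=\arg\min_{x\in X}\langle g_t,x\rangle$, determined by $\mathrm{sign}(g_t)$ alone, not any running sum; with i.i.d.\ signs that vertex flips with probability $1/2$ each round, so $C_T=\Theta(T)$, making your target $\Omega(\sqrt{(1+C_T)T})=\Omega(T)$. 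Symmetrically, you used $\mathbb{E}\bigl[\min_x\sum_t\langle g_t,x\rangle\bigr]=-\Omega(\sqrt{T})$, which is the \emph{static} comparator, but the duality gap subtracts $\sum_t\min_x f_t(x,y_t)$, the per-round minimum, which here is $-\Theta(T)$. You have swapped static and dynamic comparators on both sides of the inequality, so your bookkeeping does not show what you intend. A related problem affects the block step: for bilinear payoffs the per-round best response is always a boundary vertex, so adding linear ``offsets'' between blocks does not move it, and you would need non-linear $\alpha_t$ (quadratics, say) for saddle-point relocation to do any work. The paper avoids both pitfalls by never trying to engineer $C_T$ directly inside the construction: it replaces the argmin comparator by an arbitrary bounded-path comparator inside the $\max$ (which only decreases the quantity), budgets the path between the two players, applies the cited lemma as a black box, and only afterwards identifies the budget with $C_T$.
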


An online algorithm that attains a D-Gap upper bound of $\widetilde{O}\big(\sqrt{(1+C_T)T}\big)$ is considered near-optimal, signifying that the upper bound matches the lower bound up to poly-logarithmic terms. 
However, achieving this upper bound is trivial.
Specifically, if Players~1 and~2 independently select OCO algorithms, such as Ader~\cite{zhang2018adaptive} or Online Mirror Descent with doubling trick, they can ensure that $\regret^1_T\leq \widetilde{O}\big(\sqrt{\big(1+\sum_{t=1}^T\lVert x'_t-x'_{t-1}\rVert\big)T}\big)$, $\regret^2_T\leq \widetilde{O}\big(\sqrt{\big(1+\sum_{t=1}^T\lVert y'_t-y'_{t-1}\rVert\big)T}\big)$, 
%\begin{equation*}
%\begin{aligned}
%\regret^1_T&\leq \widetilde{O}\left(\sqrt{\textstyle\left(1+\sum_{t=1}^T\big\lVert x'_t-x'_{t-1}\big\rVert\right)T}\right), \\
%\regret^2_T&\leq \widetilde{O}\left(\sqrt{\textstyle\left(1+\sum_{t=1}^T\big\lVert y'_t-y'_{t-1}\big\rVert\right)T}\right),
%\end{aligned}
%\end{equation*}
thereby securing $\dualgap_T\leq \widetilde{O}\big(\sqrt{(1+C_T)T}\big)$. 

In the following three subsections, we will design non-trivial algorithms that not only ensure an $\widetilde{O}\big(\sqrt{(1+C_T)T}\big)$ $\dualgap$ upper bound but also further reduce the $\dualgap$ in certain benign environments.

\subsection{Online Proximal Point Method}

The proximal point method, initially introduced in the seminal work by \citet{Rockafellar1976Monotone}, has established itself as a classic first-order method for solving minimax problems. 
To tailor it for the solution of the OSP problem, this subsection introduces and analyzes the \emph{Online Proximal Point Method}~(OPPM). 
OPPM can be formalized as 
\begin{equation*}
\begin{aligned}
(x_{t+1},y_{t+1})=\arg\min_{x\in X}\max_{y\in Y} f_t(x,y)+\frac{1}{\eta_t}B_{\phi}\big(x, x_t^\phi\big)-\frac{1}{\gamma_t}B_{\psi}\big(y, y_t^\psi\big), 
\end{aligned}
\end{equation*}
where $\eta_t>0$ and $\gamma_t>0$ represent learning rates of Players~$1$ and~$2$, respectively, $x_{t}^\phi\in\partial\phi\left(x_{t}\right)$, $y_{t}^\psi\in\partial\psi\left(y_{t}\right)$. 
To facilitate our analysis, we assume that the regularization terms satisfy the following property. 
\begin{property}
\label{pro:2-Bregman-Lipschitz}
The functions $\phi$ and $\psi$ are both $1$-strongly convex, and their Fenchel couplings satisfy Lipschitz continuity for the first variable. 
Specifically, $\exists L_\phi<+\infty$, $\exists L_\psi<+\infty$, $\forall\alpha, x, x'\in X$, $\forall\beta, y, y'\in Y$: 
\begin{equation*}
\begin{aligned}
\left\lvert B_\phi(x,\alpha^\phi)-B_\phi(x',\alpha^\phi)\right\rvert\leq L_\phi\left\lVert x-x'\right\rVert,\ 
\left\lvert B_\psi(y,\beta^\psi)-B_\psi(y',\beta^\psi)\right\rvert\leq L_\psi\left\lVert y-y'\right\rVert.
\end{aligned}
\end{equation*}
\end{property}
%\cref{ass:X-Y-bounded} and \cref{pro:2-Bregman-Lipschitz} ensure that the regularizers are bounded, i.e., $B_\phi(x,\alpha^\phi)\leq L_\phi\left\lVert x-\alpha\right\rVert\leq L_\phi D_X$, $B_\psi(y,\beta^\psi)\leq L_\psi\left\lVert y-\beta\right\rVert\leq L_\psi D_Y$.

The following theorem provides the individual regret guarantee for OPPM.
\begin{theorem}[Individual Regret for OPPM]
\label{lem:OPPM}
Under \cref{ass:X-Y-bounded,ass:2-subgradient-bounded}, let regularizers satisfy \cref{pro:2-Bregman-Lipschitz}, and let $C$ be a preset upper bound of $C_T$. 
If the learning rates of two players follow from $\eta_t=L\left(2D+C\right)/\big(\epsilon+\sum_{\tau=1}^{t-2}\Delta_{\tau}\big)=\gamma_t$, 
%\begin{equation*}
%\begin{aligned}
%\eta_t=\frac{L\left(2D+C\right)}{\epsilon+\sum_{\tau=1}^{t-2}\Delta_{\tau}}=\gamma_t,
%\end{aligned}
%\end{equation*}
where the constant $\epsilon > 0$ prevents initial learning rates from being infinite, $\Delta_t=(\symit{\Sigma}_t-\max_{\tau\in 1:t-1}\symit{\Sigma}_{\tau})_+$, 
$\symit{\Sigma}_{t}=\max\left\{\symit{\Sigma}_{t}^1, \symit{\Sigma}_{t}^2\right\}$, and
\begin{equation*}
\begin{aligned}
\symit{\Sigma}_T^1&=\textstyle\left(\sum_{t=1}^{T}\bigl(f_t(x_t,y_t)-f_t(x_{t+1}, y_{t+1})+f_t(x'_{t+1}, y_{t+1})-f_t(x'_{t}, y_t)\bigr)\right)_{\!+}, \\[-1pt]
\symit{\Sigma}_T^2&=\textstyle\left(\sum_{t=1}^{T}\bigl(f_t(x_t,y'_t)-f_t(x_{t+1},y'_{t+1})+f_t(x_{t+1},y_{t+1})-f_t(x_t,y_t)\bigr)\right)_{\!+}.
\end{aligned}
\end{equation*}
Then OPPM achieves $\regret^1_T, \regret^2_T \leq O\big(\min\big\{\sum_{t=1}^{T}\rho\left(f_{t}, f_{t-1}\right), \sqrt{(1+C)T}\big\}\big)$, 
%\begin{equation*}
%\begin{aligned}
%\regret^1_T, \regret^2_T 
%\leq O\left(\min\bigg\{\sum_{t=1}^{T}\rho\left(f_{t}, f_{t-1}\right), \sqrt{(1+C)T}\bigg\}\right),
%\end{aligned}
%\end{equation*}
where $\rho\left(f_{t}, f_{t-1}\right)=\max_{x\in X,\,y\in Y}\left\lvert f_t(x,y)-f_{t-1}(x,y)\right\rvert$ measures the distance between $f_t$ and $f_{t-1}$. 
\end{theorem}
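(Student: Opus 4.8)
The plan is to bound $\regret^1_T$ and $\regret^2_T$ by analyzing the per-round progress of the coupled proximal update through the three-point identity $B_{\varphi}(x, y^\varphi)+B_{\varphi}(y, z)-B_{\varphi}(x, z)=\langle z-y^\varphi, x-y\rangle$ stated in the preliminaries. First I would write the optimality (variational-inequality) conditions for the saddle-point subproblem defining $(x_{t+1}, y_{t+1})$: there exist subgradients so that for all $x\in X$, $\langle \partial_x f_t(x_{t+1},y_{t+1}) + \tfrac{1}{\eta_t}(x_{t+1}^\phi - x_t^\phi), x - x_{t+1}\rangle \ge 0$, and symmetrically in $y$. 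Pairing the $x$-inequality against $x = x'_t$ (the per-round comparator for Player~1) and using convexity of $f_t(\cdot, y_{t+1})$ converts the inner-product term into a function-value gap, while the Bregman terms telescope after applying the three-point identity. This yields a bound of the form $f_t(x_{t+1}, y_{t+1}) - f_t(x'_t, y_{t+1}) \le \tfrac{1}{\eta_t}\big(B_\phi(x'_t, x_t^\phi) - B_\phi(x'_t, x_{t+1}^\phi)\big) - \tfrac{1}{\eta_t}B_\phi(x_{t+1}, x_t^\phi)$, plus a cross term coupling the $x$- and $y$-updates that is controlled by $1$-strong convexity (the $-\tfrac{1}{\eta_t}B_\phi(x_{t+1},x_t^\phi) \le -\tfrac{1}{2\eta_t}\lVert x_{t+1}-x_t\rVert^2$ piece) together with \cref{ass:2-subgradient-bounded}.

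Next I would reconcile the "shifted" comparator: the left side above involves $f_t(x_{t+1},y_{t+1})$ and $f_t(x'_t, y_{t+1})$, whereas $\regret^1_T$ is stated in terms of $f_t(x_t, y_t)$ and $f_t(x'_t, y_t)$. The difference $\sum_t \big(f_t(x_t,y_t) - f_t(x_{t+1},y_{t+1}) + f_t(x'_{t+1},y_{t+1}) - f_t(x'_t,y_t)\big)$ is exactly the quantity inside $\symit{\Sigma}_T^1$ (before taking the positive part), so summing the per-round bounds and re-indexing produces $\regret^1_T \le \symit{\Sigma}_T^1 + \sum_t \tfrac{1}{\eta_t}\big(B_\phi(x'_t, x_t^\phi) - B_\phi(x'_t, x_{t+1}^\phi)\big)$ up to lower-order terms. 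The Bregman sum is handled by Abel summation: $\sum_t \tfrac{1}{\eta_t}(B_\phi(x'_t,x_t^\phi) - B_\phi(x'_t,x_{t+1}^\phi))$ is regrouped as $\sum_t B_\phi(x'_t,x_{t+1}^\phi)(\tfrac{1}{\eta_{t+1}} - \tfrac{1}{\eta_t})$-type terms plus boundary terms, and each $B_\phi(x'_t, x_{t+1}^\phi)$ is split via \cref{pro:2-Bregman-Lipschitz} into $B_\phi(x'_{t+1}, x_{t+1}^\phi) + L_\phi\lVert x'_t - x'_{t+1}\rVert$, which is where the path-length $C_T$ (hence the preset bound $C$) enters. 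Since $\eta_t$ is nonincreasing, $\tfrac{1}{\eta_{t+1}} - \tfrac{1}{\eta_t}\ge 0$ and the Bregman factors are bounded by a diameter-type constant, so this whole contribution collapses to $O\big((2D+C)/\eta_T\big)$.

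With the learning-rate choice $\eta_t = L(2D+C)/(\epsilon + \sum_{\tau=1}^{t-2}\Delta_\tau)$, the accumulated bound becomes, schematically, $\regret^1_T \le \symit{\Sigma}_T^1 + O\big(\epsilon + \sum_\tau \Delta_\tau\big)$, and the symmetric argument for Player~2 gives $\regret^2_T \le \symit{\Sigma}_T^2 + O\big(\epsilon + \sum_\tau \Delta_\tau\big)$. Now the definition $\Delta_t = (\symit{\Sigma}_t - \max_{\tau<t}\symit{\Sigma}_\tau)_+$ is engineered so that $\sum_{\tau=1}^T \Delta_\tau = \max_{\tau\le T}\symit{\Sigma}_\tau$, i.e. it is a "running-max" surrogate for $\symit{\Sigma}_T = \max\{\symit{\Sigma}_T^1, \symit{\Sigma}_T^2\}$; combined with the leading $\symit{\Sigma}_T^i$ term this closes the self-referential loop and yields $\regret^1_T, \regret^2_T \le O(\symit{\Sigma}_T) + O(\epsilon)$. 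For the first branch of the min, I would bound $\symit{\Sigma}_T$ directly: each summand like $f_t(x_t,y_t) - f_t(x_{t+1},y_{t+1})$ telescopes against $f_{t-1}$-versions after adding and subtracting, leaving a sum of $\rho(f_t, f_{t-1})$ terms. For the second branch, I would instead bound $\symit{\Sigma}_T^i$ crudely by the total variation / subgradient-times-displacement estimate and invoke the standard fact that the proximal step sizes give $\symit{\Sigma}_T = O\big(\sqrt{(1+C)T}\big)$ in the worst case — essentially a regret-bounds-on-regret (Zinkevich-style) calculation using $\sum_t \tfrac{1}{\eta_t} \lesssim \sqrt{(1+C)T}\cdot\sqrt{\cdots}$. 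The main obstacle I anticipate is precisely this self-referential analysis: $\symit{\Sigma}_T$ appears on both sides (inside $\eta_t$ via $\Delta_\tau$ and as the leading term), so the argument must carefully verify that the running-max construction makes the recursion contractive and that the worst-case $\sqrt{(1+C)T}$ branch survives even when $\symit{\Sigma}_T$ is large; handling the boundary/off-by-two index shifts in $\eta_t$ (the $\sum_{\tau=1}^{t-2}$) without breaking monotonicity is the delicate bookkeeping.
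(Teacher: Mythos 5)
Your outline is essentially the paper's proof: first-order optimality plus the three-point Fenchel identity to extract a Bregman telescope, an identity decomposition whose leading remainder term is precisely the $\symit{\Sigma}_T^1$ summand, Abel summation together with \cref{pro:2-Bregman-Lipschitz} to introduce the path length, the running-max identity $\sum_{\tau=1}^t\Delta_\tau=\max_{\tau\in 1:t}\symit{\Sigma}_\tau$, and then two ways of bounding it (telescoping against $\rho(f_t,f_{t-1})$, and a Zinkevich-style squaring of the step-size contribution). Two details to fix when you execute this: pair the first-order optimality condition against $x'_{t+1}$ rather than $x'_t$ so that the left-over variability term is exactly the $\symit{\Sigma}_T^1$ summand $f_t(x_t,y_t)-f_t(x_{t+1},y_{t+1})+f_t(x'_{t+1},y_{t+1})-f_t(x'_t,y_t)$ (with $x'_t$ your remainder does not match $\symit{\Sigma}_T^1$), and for the worst-case branch do not bound $\symit{\Sigma}_T$ directly but the per-step increment $\Delta_t\le(\symit{\Sigma}_t-\symit{\Sigma}_{t-1})_+\le\min\{3DG,3\eta_tG^2\}+G\bigl(\lVert x'_t-x'_{t+1}\rVert+\lVert y'_t-y'_{t+1}\rVert\bigr)$ via the proximal stability estimate $\lVert x_t-x_{t+1}\rVert\le\eta_t G_X$, then square only the part $\xi_t=\bigl(\Delta_t-G(\lVert x'_t-x'_{t+1}\rVert+\lVert y'_t-y'_{t+1}\rVert)\bigr)_+$ after subtracting the path-increment contribution.
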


\cref{lem:OPPM} corresponds to Theorem 5.1 of \citet{Campolongo2021closer}, but involves more complex parameters and a complicated proof (see Appendix~B in \cite{2024arXiv240704591M}) due to the mutual influence between players from the joint update of OPPM. This joint update is crucial. Independent execution of implicit online mirror descent, as per \citet{Campolongo2021closer}, only ensures the worst-case bound without addressing the temporal variability term. % $\sum_{t=1}^{T}\rho\left(f_{t}, f_{t-1}\right)$. 

In \cref{lem:OPPM}, learning rates depend on the preset value $C$, but this dependency can be avoided by using the doubling trick~\cite{schapire1995gambling}. The pseudocode for OPPM is in \cref{alg:OPPM}. 
Applying \cref{lem:reg-dualgap,lem:reg-nereg}, we deduce that \cref{alg:OPPM} offers assurances on both the D-Gap and NE-Reg. 

%It should be noted that \cref{lem:OPPM} corresponds to Theorem 5.1 of \citet{Campolongo2021closer}. However, \cref{lem:OPPM} involves more complex parameter settings and a complicated proof process~(see \cref{app:Analysis-OPPM}) due to the mutual influence between players resulting from the joint update of OPPM. 
%Furthermore, this joint update is essential. 
%Merely having each player independently execute the implicit online mirror descent, as described by \citet{Campolongo2021closer}, would only guarantee the worst-case bound, without addressing the temporal variability term of $\sum_{t=1}^{T}\rho\left(f_{t}, f_{t-1}\right)$. 

%In \cref{lem:OPPM}, learning rates are contingent upon the preset value $C$. 
%To obviate this dependency, one might resort to the application of the doubling trick~\cite{schapire1995gambling}. 
%The pseudocode for OPPM is encapsulated in \cref{alg:OPPM}. 

%
\begin{algorithm}[t]
\caption{OPPM with Adaptive Learning Rates}
\label{alg:OPPM}
\algrenewcommand\algorithmicensure{\textbf{Initialize:}}
\begin{algorithmic}[1]
\Require Feasible sets $X$ and $Y$ satisfy \cref{ass:X-Y-bounded}. The payoff function $f_t$ satisfies \cref{ass:2-subgradient-bounded}. Regularizers satisfy \cref{pro:2-Bregman-Lipschitz}. 
\Ensure $C$, $\epsilon > 0$. 
\For{$t=1,2,\cdots$ }
\State Output $(x_{t},y_{t})$, then observe a continuous convex-concave payoff function $f_t$
\State $x'_t=\arg\min_{x\in X}f_t\left(x, y_t\right)$, $y'_t=\arg\max_{y\in Y} f_t\left(x_t, y\right)$
\IIf{$\sum_{\tau=1}^t \big(\big\lVert x'_\tau-x'_{\tau-1}\big\rVert+\big\lVert y'_\tau-y'_{\tau-1}\big\rVert\big)> C$} $C\gets 2C$ \Comment{Doubling trick}
%\EndIf
\State Update $\eta_t$ and $\gamma_t$ according to the setting of \cref{lem:OPPM}
%\State $F_t(x,y)=f_t(x,y)+B_{\phi}\big(x, \widetilde{x}_{t}^\phi\big)/\eta_t-B_{\psi}\big(y, \widetilde{y}_{t}^\psi\big)/\gamma_t$
\State $(x_{t+1},y_{t+1})=\arg\min_{x\in X}\max_{y\in Y} f_t(x,y)+B_{\phi}\big(x, \widetilde{x}_{t}^\phi\big)/\eta_t-B_{\psi}\big(y, \widetilde{y}_{t}^\psi\big)/\gamma_t$
\EndFor
\end{algorithmic}
\end{algorithm}

\begin{theorem}[Performance of {\cref{alg:OPPM}}]
\label{thm:OPPM}
If two players output strategy pairs according to \cref{alg:OPPM}, then
\begin{equation*}
\begin{aligned}
\dualgap_T, \neregret_T 
\leq O\left(\min\left\{V_T, \sqrt{(1+\log_2 C_T+C_T)T}\right\}\right),
\end{aligned}
\end{equation*}
where $V_T=\sum_{t=1}^{T}\rho\left(f_{t}, f_{t-1}\right)$. 
Specially, in scenarios where the sequence of payoff functions is stationary, then $\dualgap_T, \neregret_T, \regret^1_T, \regret^2_T = O(1)$. 
%\begin{equation*}
%\begin{aligned}
%\dualgap_T, \neregret_T, \regret^1_T, \regret^2_T = O(1).
%\end{aligned}
%\end{equation*}
\end{theorem}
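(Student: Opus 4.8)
The plan is to combine the individual-regret bound of \cref{lem:OPPM}, which is stated for a fixed preset $C\ge C_T$, with a doubling-trick decomposition matching the update rule of \cref{alg:OPPM}, and then pass from individual regret to the two target metrics using the identity $\dualgap_T=\regret^1_T+\regret^2_T$ recorded before \cref{lem:reg-dualgap} and the inequality $\neregret_T\le\dualgap_T$ (Proposition~11 of \cite{zhang2022noregret}) recorded before \cref{lem:reg-nereg}.

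First I would formalize the epoch structure. The guard in \cref{alg:OPPM} doubles $C$ exactly when the running path length $\sum_{\tau=1}^{t}\bigl(\lVert x'_\tau-x'_{\tau-1}\rVert+\lVert y'_\tau-y'_{\tau-1}\rVert\bigr)$ first exceeds the current $C$; this splits $1\!:\!T$ into consecutive epochs $E_1,\dots,E_m$ on which the guess takes the geometric values $C^{(1)}=C_0,\ C^{(2)}=2C_0,\dots$, and since the running path length at the end of $E_{m-1}$ already exceeds $C^{(m-1)}$ we get $m=O(1+\log_2 C_T)$ (and $m=O(1)$ when $C_T$ is bounded). By construction the path length accumulated within any single epoch $E_k$ is at most $C^{(k)}$, so $C^{(k)}$ is a legitimate preset bound for the sub-instance played on $E_k$, whose horizon is $\lvert E_k\rvert\le T$. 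Applying \cref{lem:OPPM} to each sub-instance — and noting that \cref{alg:OPPM} carries the iterate across doublings and only refreshes the learning-rate schedule, so the only additional terms at an epoch boundary are learning-rate-change terms, controlled because the inverse rate is piecewise monotone — yields on $E_k$
\[
\regret^1_{E_k},\ \regret^2_{E_k}\ \le\ O\Bigl(\min\Bigl\{\textstyle\sum_{t\in E_k}\rho\left(f_t,f_{t-1}\right),\ \sqrt{\bigl(1+C^{(k)}\bigr)T}\Bigr\}\Bigr).
\]
Summing over $k=1,\dots,m$ and using $\sum_k\min\{a_k,b_k\}\le\min\{\sum_k a_k,\sum_k b_k\}$: the first branch telescopes to $\sum_{t=1}^{T}\rho\left(f_t,f_{t-1}\right)=V_T$; the second branch is a sum of square roots of a geometric sequence, hence $O\bigl(\sqrt{(1+C_T)T}\bigr)$, to which the $O(m)$ constant-per-boundary terms add an overhead absorbed as $O\bigl(\sqrt{(1+\log_2 C_T)T}\bigr)$. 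This gives $\regret^1_T,\regret^2_T\le O\bigl(\min\{V_T,\sqrt{(1+\log_2 C_T+C_T)T}\}\bigr)$, and then $\dualgap_T=\regret^1_T+\regret^2_T$ and $\neregret_T\le\dualgap_T$ deliver the stated D-Gap and NE-Reg bounds. For the stationary case, $f_t\equiv f$ forces $\rho\left(f_t,f_{t-1}\right)=0$ for every $t\ge 2$, so $V_T$ collapses to a single constant (or $0$); since the iterate is never reset, the constant initialization terms of \cref{lem:OPPM} are charged once over the whole run rather than once per epoch, so the first-branch sum remains $O(1)$, giving $\regret^1_T,\regret^2_T=O(1)$ and hence $\dualgap_T,\neregret_T=O(1)$.

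The main obstacle I anticipate is exactly the interface between the doubling trick and the adaptive, history-dependent learning rates of \cref{lem:OPPM}: one must check that a doubling event does not reset the $\symit{\Sigma}_t$/$\Delta_t$ bookkeeping in a way that reintroduces a $\sqrt{(1+C)T}$ term on every epoch even when $V_T$ is small, and that the constant-per-epoch terms genuinely telescope rather than accumulating to $\Theta(\log_2 C_T)$, which would destroy the $O(1)$ stationary guarantee. The resolution hinges on the fact that \cref{alg:OPPM} keeps the running iterate fixed across doublings, so the Fenchel-coupling initialization terms and the learning-rate-increase terms are paid globally, not per epoch; making this rigorous, and bounding the residual learning-rate-change terms, is where the real work lies, while the epoch counting, the geometric summation, and the metric conversions are routine.
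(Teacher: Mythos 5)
Your proposal takes essentially the same route as the paper: decompose the horizon into doubling epochs, apply \cref{lem:OPPM} on each epoch, interchange $\sum$ and $\min$, telescope the first branch to $V_T$, bound the second branch by a geometric sum (the paper does this slightly more tightly via Cauchy--Schwarz, $\sum_n\sqrt{(1+2^nC_0)\lvert S_n\rvert}\le\sqrt{s+\sum_n 2^nC_0}\sqrt{T}$, but both land at the same $\widetilde O$ rate), and convert to $\dualgap_T$ and $\neregret_T$ via \cref{lem:reg-dualgap,lem:reg-nereg}. Your discussion of iterate carryover and the $\symit{\Sigma}_t/\Delta_t$ bookkeeping is a sound sanity check that the paper's terse proof passes over without comment.
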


The worst-case bound delineated in the first statement of \cref{thm:OPPM} suggests that the OPPM approaches optimality. 
For sequences of stationary payoff functions, we observe that the D-Gap, NE-Reg, and two individual regrets all enjoy a sharp bound of $O(1)$, which surpasses the $\widetilde{O}\big(\sqrt{T}\big)$ D-Gap and slightly refines the $\widetilde{O}\left(1\right)$ NE-Reg, both reported by \citet{zhang2022noregret}, and matches the $O(1)$ static individual regret in \citet{pmlr-v134-hsieh21a}.

Typically, OPPM lacks a general solution and necessitates a case-by-case computation tailored to the specific nature of the payoff functions. 
In essence, each iteration is tasked with solving a strongly-convex-strongly-concave saddle point problem. 
In certain instances, OPPM can be expressed in a closed form, such as when the regularizers $\phi$ and $\psi$ are both square norms, and $f_t$ is bilinear or quadratic. 
When a closed form solution is unavailable, numerical techniques can be employed for efficient approximation \citep{abernethy2018faster,carmon2020coordinate,jin2022sharper}.

\subsection{Optimistic OPPM}

The performance of the OPPM algorithm is partially contingent upon the stationarity of the sequence of payoff functions. 
However, taking a highly nonstationary periodic environment as an example, proactive prediction of the upcoming payoff function can significantly improve the performance of online algorithms. 
This approach, known as `optimism', replaces the stationarity of the payoff function sequence with prediction accuracy. 
In this subsection, we explore the optimistic variant of OPPM and introduce the following update:
\begin{equation*}
\begin{aligned}
(x_{t},y_{t})&=\arg\min_{x\in X}\max_{y\in Y} h_t(x,y)+\frac{1}{\eta_t}B_{\phi}\big(x, \widetilde{x}_t^\phi\big)-\frac{1}{\gamma_t}B_{\psi}\big(y, \widetilde{y}_t^\psi\big), \\
\widetilde{x}_{t+1}&=\arg\min_{x\in X} \eta_t f_t(x,y_t)+B_{\phi}\big(x, \widetilde{x}_{t}^\phi\big), \\
\widetilde{y}_{t+1}&=\arg\max_{y\in Y} \gamma_t f_t(x_t,y)-B_{\psi}\big(y, \widetilde{y}_{t}^\psi\big),
\end{aligned}
\end{equation*}
where $h_t$ is an arbitrary convex-concave predictor. 
We refer to the above update as the \emph{Optimistic OPPM}~(OptOPPM). 

The following theorem provides the individual regret guarantee for OptOPPM.
\begin{theorem}[Individual Regret for OptOPPM]
\label{lem:OptOPPM}
Under \cref{ass:X-Y-bounded,ass:2-subgradient-bounded}, let regularizers satisfy \cref{pro:2-Bregman-Lipschitz}, let the predictor $h_t$ satisfy \cref{ass:2-subgradient-bounded}, and let $C^1$ and $C^2$ be the preset upper bounds for $C_T^1$ and $C_T^2$, respectively. 
If the learning rates of two players follow from 
\begin{equation*}
\begin{aligned}
%\textstyle
\eta_t=\frac{L_\phi(D_X+C^1)}{\epsilon+\sum_{\tau=1}^{t-1}\delta_{\tau}^1}, 
\qquad \gamma_t=\frac{L_\psi(D_Y+C^2)}{\epsilon+\sum_{\tau=1}^{t-1}\delta_{\tau}^2},
\end{aligned}
\end{equation*}
where the constant $\epsilon > 0$ prevents initial learning rates from being infinite, and 
\begin{equation*}
\begin{aligned}
0\leq\delta_t^1=\ &f_t(x_t,y_t)-h_t(x_t,y_t)+h_t(\widetilde{x}_{t+1},y_t)-f_t(\widetilde{x}_{t+1}, y_t) -B_{\phi}\big(\widetilde{x}_{t+1}, x_t^\phi\big)/\eta_t,\\[-1pt]
0\leq\delta_t^2=\ &f_t(x_t,\widetilde{y}_{t+1})-h_t(x_t,\widetilde{y}_{t+1})+h_t(x_t,y_t)-f_t(x_t,y_t) -B_{\psi}\big(\widetilde{y}_{t+1}, y_t^\psi\big)/\gamma_t.
\end{aligned}
\end{equation*}
Then OptOPPM enjoys $\regret^i_T\!\leq\! O\big(\min\big\{\!\sum_{t=1}^{T}\rho\left(f_{t}, h_{t}\right), \sqrt{(1+C^i)T}\big\}\big)$, $i\!=\!1,2$. 
%\begin{equation*}
%\begin{aligned}
%\regret^i_T&\leq O\left(\min\bigg\{\sum_{t=1}^{T}\rho\left(f_{t}, h_{t}\right), \sqrt{(1+C^i)T}\bigg\}\right),\quad\forall i\in \{1,2\}.
%\end{aligned}
%\end{equation*}
\end{theorem}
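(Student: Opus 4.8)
The plan is to mirror the structure used for \cref{lem:OPPM} but exploit the cleaner telescoping that optimism affords. First I would fix a player, say Player~1, and track the single primal iterate sequence $\widetilde{x}_t$ generated by the mirror-descent-style step $\widetilde{x}_{t+1}=\arg\min_{x\in X}\eta_t f_t(x,y_t)+B_\phi(x,\widetilde{x}_t^\phi)$, while the actually played $x_t$ comes from the predictor-regularized step. Using the three-point identity for Fenchel coupling recorded in the preliminaries, $B_\phi(x,\widetilde{x}_t^\phi)+B_\phi(\widetilde{x}_t,\widetilde{x}_{t+1}^\phi)-B_\phi(x,\widetilde{x}_{t+1}^\phi)=\langle \widetilde{x}_{t+1}^\phi-\widetilde{x}_t^\phi,x-\widetilde{x}_t\rangle$ together with the optimality (variational inequality) conditions of the two $\arg\min$ problems, I would derive a per-round inequality of the form $\eta_t\bigl(f_t(x_t,y_t)-f_t(x'_t,y_t)\bigr)\leq B_\phi(x'_t,\widetilde{x}_t^\phi)-B_\phi(x'_t,\widetilde{x}_{t+1}^\phi)+\eta_t\delta_t^1$, where $\delta_t^1$ is exactly the nonnegative quantity defined in the statement; the term $-B_\phi(\widetilde{x}_{t+1},x_t^\phi)/\eta_t$ absorbed into $\delta_t^1$ is the ``negative'' curvature term that the implicit/optimistic update contributes, and it is what makes $\delta_t^1\geq 0$ plausible and also what lets us later bound $\delta_t^1$ by a distance between $f_t$ and $h_t$.

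Next I would sum over $t$ and telescope the $B_\phi(x'_t,\widetilde{x}_{t+1}^\phi)$ terms. Because the comparator $x'_t$ is time-varying, the telescoping leaves residual terms $B_\phi(x'_{t+1},\widetilde{x}_{t+1}^\phi)-B_\phi(x'_t,\widetilde{x}_{t+1}^\phi)$, which by \cref{pro:2-Bregman-Lipschitz} are bounded by $L_\phi\|x'_{t+1}-x'_t\|$; their sum is $L_\phi C_T^1\le L_\phi C^1$ (plus the $D_X$-diameter term for the first round), exactly matching the numerator $L_\phi(D_X+C^1)$ in the learning-rate formula. After dividing through by the (nonincreasing) $\eta_t$ and applying Abel summation to handle the varying learning rate, I would obtain $\regret_T^1\le \frac{L_\phi(D_X+C^1)}{\eta_T}+\sum_t \eta_t\delta_t^1$ up to constants. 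Substituting $\eta_t=L_\phi(D_X+C^1)/(\epsilon+\sum_{\tau<t}\delta_\tau^1)$ turns the right-hand side into (a constant times) $\sqrt{(D_X+C^1)\sum_t\delta_t^1}$ via the standard self-bounding lemma $\sum_t a_t/\sqrt{\epsilon+\sum_{\tau<t}a_\tau}\le O(\sqrt{\epsilon+\sum_t a_t})$ for nonnegative $a_t$.

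Then comes the two-branch bound. For the first branch, I would show $\delta_t^1\le O(\rho(f_t,h_t))$: the four function-value differences in $\delta_t^1$ form a ``discrete mixed second difference'' of $f_t-h_t$ evaluated at $(x_t,y_t)$ and $(\widetilde{x}_{t+1},y_t)$, so $|f_t(x_t,y_t)-h_t(x_t,y_t)+h_t(\widetilde{x}_{t+1},y_t)-f_t(\widetilde{x}_{t+1},y_t)|\le 2\rho(f_t,h_t)$, and dropping the nonpositive $-B_\phi(\widetilde{x}_{t+1},x_t^\phi)/\eta_t$ only helps; feeding $\sum_t\delta_t^1\le O(V_T')$ with $V_T'=\sum_t\rho(f_t,h_t)$ into the $\sqrt{(D_X+C^1)\sum_t\delta_t^1}$ bound still looks like $\sqrt{V_T'}$, so to get the sharper $O(V_T')$ I would instead bound $\eta_t\delta_t^1$ directly: since $\eta_t$ is bounded and each $\delta_t^1=O(\rho(f_t,h_t))$, $\sum_t\eta_t\delta_t^1=O(V_T')$, and separately the leading term $L_\phi(D_X+C^1)/\eta_T=\epsilon+\sum_{\tau<T}\delta_\tau^1=O(\epsilon+V_T')$, giving the $O(V_T')$ branch; this is the same ``two-estimates'' trick as in \cref{lem:OPPM}. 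For the worst-case branch, I would bound $\delta_t^1$ crudely by a constant using \cref{ass:X-Y-bounded,ass:2-subgradient-bounded} (all function values and couplings are uniformly bounded, and $\eta_t\ge$ some positive quantity only when needed — more carefully, $\delta_t^1\le \rho$-type constant $+$ the nonnegative coupling term, but the coupling term reappears with a plus sign inside $\Sigma$-style bookkeeping, so one argues $\sum_t\delta_t^1=O(T)$), whence $\sqrt{(D_X+C^1)\sum_t\delta_t^1}=O(\sqrt{(1+C^1)T})$. Taking the minimum of the two branches and repeating the symmetric argument for Player~2 with $\psi$, $\gamma_t$, $C^2$, $\delta_t^2$ finishes the proof.

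\medskip

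The main obstacle I anticipate is making the negative coupling term $-B_\phi(\widetilde{x}_{t+1},x_t^\phi)/\eta_t$ inside $\delta_t^1$ work correctly in both directions at once: it must be large enough (in absolute value) that $\delta_t^1\ge 0$ holds unconditionally — this requires carefully using the variational inequality of the $x_t$-update, which links $x_t$, $\widetilde{x}_t$, and the predictor gradient — yet in the worst-case branch I still need $\sum_t\delta_t^1=O(T)$, which means this negative term cannot be allowed to make the sum telescope into something better than $O(T)$ in adversarial instances, nor can dropping it in the first branch cost more than $O(V_T')$. Threading this needle, exactly as the remark after \cref{lem:OPPM} hints is delicate, is where the real work lies; the learning-rate/Abel-summation bookkeeping and the mixed-second-difference estimate are routine by comparison.
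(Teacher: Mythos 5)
Your decomposition matches the paper's: split the instantaneous regret into the predictor-mismatch quantity that becomes $\delta_t^1$ and a part $\Phi_t=[B_\phi(x'_t,\widetilde{x}_t^\phi)-B_\phi(x'_t,\widetilde{x}_{t+1}^\phi)]/\eta_t$ that telescopes, picking up $L_\phi\lVert x'_t-x'_{t-1}\rVert$ residuals via \cref{pro:2-Bregman-Lipschitz}; then verify $\delta_t^1\geq 0$ by adding the optimality inequalities of the $\widetilde{x}_{t+1}$-step and the $x_t$-step. One sign confusion, though: the retained negative term $-B_\phi(\widetilde{x}_{t+1},x_t^\phi)/\eta_t$ does not \emph{help} $\delta_t^1\geq 0$---if anything it works against it; nonnegativity comes from the two optimality conditions supplying exactly enough slack to cover it. And that term is not what gives the $\rho(f_t,h_t)$ branch (there you simply drop it, yielding $\delta_t^1\leq 2\rho(f_t,h_t)$); it is used in the \emph{worst-case} branch, where combined with $1$-strong convexity of $\phi$ it cancels $2G_X\lVert x_t-\widetilde{x}_{t+1}\rVert$ from the gradient bound and produces the key two-sided estimate $\delta_t^1\leq\min\{2D_XG_X,\,2\eta_tG_X^2\}$.

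The more serious gap is the learning-rate/worst-case bookkeeping. The rate is reciprocal-linear, $\eta_t=L_\phi(D_X+C^1)/(\epsilon+\sum_{\tau<t}\delta_\tau^1)$, not reciprocal-square-root, so the self-bounding lemma you cite, $\sum_t a_t/\sqrt{\epsilon+\sum_{\tau<t}a_\tau}=O(\sqrt{\cdot\,})$, does not apply. What the substitution actually yields is $L_\phi(D_X+C^1)/\eta_T=\epsilon+\sum_{\tau<T}\delta_\tau^1$, hence $\regret^1_T\leq\epsilon+2\sum_t\delta_t^1$, which is \emph{linear} in $\sum_t\delta_t^1$. (Also, your displayed $\sum_t\eta_t\delta_t^1$ should be $\sum_t\delta_t^1$; the $1/\eta_t$ belongs to $\Phi_t$.) Merely arguing $\sum_t\delta_t^1=O(T)$, as your worst-case branch does, therefore gives $\regret^1_T=O(T)$, not $O(\sqrt{(1+C^1)T})$. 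The missing step is the self-referential square: write $(\sum_t\delta_t^1)^2=\sum_t(\delta_t^1)^2+2\sum_t\delta_t^1\sum_{\tau<t}\delta_\tau^1$, replace $\sum_{\tau<t}\delta_\tau^1$ by $L_\phi(D_X+C^1)/\eta_t-\epsilon$ via the learning-rate formula, then bound the square term with $\delta_t^1\leq 2D_XG_X$ and the cross term with $\delta_t^1/\eta_t\leq 2G_X^2$, obtaining $(\sum_t\delta_t^1)^2\leq 4G_X^2(D_X^2+L_\phi D_X+L_\phi C^1)T$. Only after this does $\regret^1_T\leq O(\sqrt{(1+C^1)T})$ follow; without it, the worst-case branch of your argument does not close.
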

%
%\cref{lem:OptOPPM} can be regarded as an optimistic and concave-convex variant of Theorem 5.1 of \citet{Campolongo2021closer}. 
%Interestingly, the proof of \cref{lem:OptOPPM} (refer to \cref{app:Analysis-OptOPPM}) reveals that OptOPPM necessitates a joint update for the strategy pair $(x_t,y_t)$, whereas the production of auxiliary strategies $\widetilde{x}_{t+1}$ and $\widetilde{y}_{t+1}$ demands separate updates. 

Similar to \cref{alg:OPPM}, the dependence on $C^1$ and $C^2$ for the learning rates in OptOPPM can be obviated by employing the doubling trick. 
Details on the adjustment of learning rates are delineated in \cref{alg:OptOPPM}.
\begin{algorithm}[t]
\caption{OptOPPM with Adaptive Learning Rates}
\label{alg:OptOPPM}
\algrenewcommand\algorithmicensure{\textbf{Initialize:}}
\begin{algorithmic}[1]
\Require Feasible sets $X$ and $Y$ satisfy \cref{ass:X-Y-bounded}. Both the payoff function $f_t$ and the predictor $h_t$ satisfy \cref{ass:2-subgradient-bounded}. Regularizers satisfy \cref{pro:2-Bregman-Lipschitz}. 
\Ensure $C^1$, $C^2$, $\epsilon > 0$.
\For{$t=1,2,\cdots$ }
\State Receive $h_t$, update $\eta_t$ and $\gamma_t$ according to the setting of \cref{lem:OptOPPM}
\State Update $(x_{t},y_{t})=\arg\min_{x\in X}\max_{y\in Y} h_t(x,y)+B_{\phi}\big(x, \widetilde{x}_{t}^\phi\big)/\eta_t-B_{\psi}\big(y, \widetilde{y}_{t}^\psi\big)/\gamma_t$
\State Output $(x_{t},y_{t})$, then observe a continuous convex-concave payoff function $f_t$
\State $x'_t=\arg\min_{x\in X}f_t\left(x, y_t\right)$, $y'_t=\arg\max_{y\in Y} f_t\left(x_t, y\right)$
\IIf{$\sum_{\tau=1}^t \big\lVert x'_\tau-x'_{\tau-1}\big\rVert> C^1$} $C^1\gets 2C^1$ \Comment{Player~1 doubles his preset}
%\EndIf
\IIf{$\sum_{\tau=1}^t \big\lVert y'_\tau-y'_{\tau-1}\big\rVert> C^2$} $C^2\gets 2C^2$ \Comment{Player~2 doubles his preset}
%\EndIf
\State $\widetilde{x}_{t+1}=\arg\min_{x\in X}f_t(x,y_t)+B_{\phi}\big(x, \widetilde{x}_{t}^\phi\big)/\eta_t$
\State $\widetilde{y}_{t+1}=\arg\max_{y\in Y}f_t(x_t,y)-B_{\psi}\big(y, \widetilde{y}_{t}^\psi\big)/\gamma_t$
\EndFor
\end{algorithmic}
\end{algorithm}

Utilizing \cref{lem:reg-dualgap,lem:reg-nereg}, it can be inferred that \cref{alg:OptOPPM} provides guarantees regarding both the D-Gap and the NE-Reg. 
\begin{theorem}[Performance of {\cref{alg:OptOPPM}}]
\label{thm:OptOPPM}
If two players output strategy pairs according to \cref{alg:OptOPPM}, then
\begin{equation*}
\begin{aligned}
\dualgap_T, \neregret_T 
\leq O\left(\min\left\{V'_T, \sqrt{(1+\log_2 C_T+C_T)T}\right\}\right),
\end{aligned}
\end{equation*}
where $V'_T=\sum_{t=1}^{T}\rho\left(f_{t}, h_{t}\right)$. 
%Furthermore, If $f_t$ and $h_t$ are both bilinear functions, then
%\begin{equation*}
%\begin{aligned}
%\dualgap_T, \neregret_T 
%\leq O\left(\min\left\{V'_T, \sqrt{(1+\log_2 C_T+C_T)V''_T}\right\}\right),
%\end{aligned}
%\end{equation*}
%where $V''_T=\sum_{t=1}^{T}\rho^2\left(f_{t}, h_{t}\right)$. 
\end{theorem}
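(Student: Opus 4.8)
The plan is to combine \cref{lem:OptOPPM} with the doubling-trick bookkeeping and the reduction lemmas \cref{lem:reg-dualgap,lem:reg-nereg}, exactly paralleling the argument for \cref{thm:OPPM}. First I would observe that since $\dualgap_T=\regret_T^1+\regret_T^2$ and $\neregret_T\le\dualgap_T$, it suffices to bound each individual regret by $O\big(\min\{V'_T,\sqrt{(1+\log_2 C_T+C_T)T}\}\big)$ and then add the two. For player $i$, the doubling trick partitions the horizon $1\!:\!T$ into consecutive epochs on which the preset $C^i$ is constant; the final epoch uses a preset in $[C_T^i,2C_T^i)$ (or $[\,\epsilon,2\epsilon)$ if no doubling occurred), and there are at most $\lceil\log_2(C_T^i/C^i_{\mathrm{init}})\rceil+1=O(1+\log_2 C_T^i)$ epochs in total because $C_T^i\le C_T$ and each doubling at least doubles the preset.

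Second, within each epoch \cref{lem:OptOPPM} applies with the epoch's fixed preset $\bar C$, giving a regret contribution on that epoch of $O\big(\min\{\sum_{t\in\text{epoch}}\rho(f_t,h_t),\sqrt{(1+\bar C)|\text{epoch}|}\}\big)$ — here one must be slightly careful that \cref{lem:OptOPPM} is stated for a run started at $t=1$, so I would either invoke a standard restart/shift argument (the OptOPPM update is Markovian in $(\widetilde x_t,\widetilde y_t)$, so an epoch beginning mid-stream behaves like a fresh run with a shifted initialization, and \cref{pro:2-Bregman-Lipschitz} controls the extra initialization cost) or, more cleanly, note that the adaptive learning-rate analysis underlying \cref{lem:OptOPPM} is already telescoping across rounds and the only thing doubling changes is the multiplicative constant in the rate. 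Summing the $\min\{\cdot,\sqrt{\cdot}\}$ bound over the $O(1+\log_2 C_T)$ epochs: the $V'_T$ branch sums to $\sum_{t=1}^T\rho(f_t,h_t)=V'_T$ directly; the $\sqrt{(1+\bar C)|\text{epoch}|}$ branch is handled by $\bar C<2C_T^i$ on every epoch together with concavity of $\sqrt{\cdot}$ (so $\sum_j\sqrt{|\text{epoch}_j|}\le\sqrt{(\#\text{epochs})\cdot T}$), yielding $O\big(\sqrt{(1+\log_2 C_T)(1+C_T)T}\big)=O\big(\sqrt{(1+\log_2 C_T+C_T)T}\big)$. Taking the minimum of the two branches epoch-wise and then summing preserves the outer $\min$, so each $\regret_T^i$ obeys the claimed bound; adding $i=1,2$ and applying \cref{lem:reg-dualgap,lem:reg-nereg} finishes the proof.

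The main obstacle I expect is the mid-stream restart issue in the previous paragraph: \cref{lem:OptOPPM} is phrased for the learning-rate schedule indexed from $\tau=1$, and after a doubling the accumulated $\sum\delta_\tau^i$ in the denominator and the warm-started iterates $(\widetilde x_t,\widetilde y_t)$ do not literally match a fresh instance. I would resolve this by showing the per-round regret decomposition proved for \cref{lem:OptOPPM} is in fact a round-by-round inequality whose right-hand side telescopes regardless of where the schedule is reset, so that resetting $C^i$ (hence the learning-rate scale) only inflates a bounded number of terms by a constant factor — the $L_\phi(D_X+C^i)$ and $L_\psi(D_Y+C^2)$ factors absorb this, and \cref{ass:X-Y-bounded,pro:2-Bregman-Lipschitz} keep the one-time restart penalty $O(1)$ per epoch, i.e.\ $O(1+\log_2 C_T)$ overall, which is dominated by the bounds above.
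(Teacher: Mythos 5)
Your overall strategy matches the paper's: partition the horizon into doubling epochs, apply \cref{lem:OptOPPM} per epoch, and aggregate, then pass through \cref{lem:reg-dualgap,lem:reg-nereg} (the paper's proof of \cref{thm:OptOPPM} is indeed just ``refer to the proof of \cref{thm:OPPM}'', which is exactly this). However, your final aggregation step contains a genuine error. You bound every epoch's preset uniformly by $\bar C < 2C_T$ and then multiply by the epoch count, getting $O\big(\sqrt{(1+\log_2 C_T)(1+C_T)T}\big)$, and then assert this equals $O\big(\sqrt{(1+\log_2 C_T+C_T)T}\big)$. That equality is false: $(1+\log_2 C_T)(1+C_T)$ grows like $C_T\log C_T$, while $1+\log_2 C_T+C_T$ grows like $C_T$, so you have lost a $\sqrt{\log C_T}$ factor and do not actually recover the stated bound. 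The paper avoids this by exploiting that the per-epoch presets $\bar C_n=2^nC_0$ form a geometric series: Cauchy--Schwarz gives
\begin{equation*}
\sum_{n=1}^s\sqrt{(1+\bar C_n)\lvert S_n\rvert}\le\sqrt{\textstyle\sum_{n=1}^s(1+\bar C_n)}\cdot\sqrt{\textstyle\sum_{n=1}^s\lvert S_n\rvert}=\sqrt{s+\textstyle\sum_{n=1}^s 2^nC_0}\cdot\sqrt{T},
\end{equation*}
and since $\sum_{n=1}^s 2^nC_0<4C_T$ and $s=O(1+\log_2 C_T)$, the factor under the first square root is $O(1+\log_2 C_T+C_T)$ \emph{additively}, which is strictly better than the worst-epoch-times-epoch-count bound you used. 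You should replace the uniform bound on $\bar C$ with this Cauchy--Schwarz argument on the geometric series.

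Separately, your flagged concern about mid-stream restarts is legitimate and in fact the paper does not address it either: it simply writes $\regret_T^i=\sum_n\regret_{S_n}^i$ and applies the lemma per epoch as if each epoch were a fresh run, even though the accumulated $\sum\delta_\tau$ in the denominator of the learning rate and the iterates $(\widetilde x_t,\widetilde y_t)$ carry over. Your sketch of a resolution (the per-round inequality telescopes regardless of where the preset changes, with at most an $O(1)$ penalty per epoch absorbed by \cref{pro:2-Bregman-Lipschitz}) is the right idea, but it is currently stated at the same level of hand-waving as the paper itself; a careful version would need to verify that the telescoping of the $\Phi_t$ term in \cref{lem:OptOPPM}'s proof survives the learning-rate discontinuity at epoch boundaries.
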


\cref{alg:OptOPPM} is designed to surpass \cref{alg:OPPM}. 
By selecting $h_t=f_{t-n}$, where $n$ is a positive integer, OptOPPM ensures a sharp metric bound of $O(1)$ in environments with a periodicity of $k$, provided that $k$ is a factor of $n$. 
%Since a sequence of stationary payoff functions is equivalent to an environment with a periodicity of $1$, OptOPPM thus exhibits superior adaptability compared to OPPM. 
%Additionally, the arbitrariness of the predictor $h_t$ allows for the development of predictive models across a broader spectrum of possible environments, with precise models yielding tighter metric bounds. 

In \cref{alg:OptOPPM}, the update rules for $\widetilde{x}_{t+1}$ and $\widetilde{y}_{t+1}$ can be determined numerically~\citet{song2018fully}, or transformed into a closed-form expression where feasible. 
%In \cref{alg:OptOPPM}, the rules for updating $\widetilde{x}_{t+1}$ and $\widetilde{y}_{t+1}$ can be calculated numerically~\citep{song2018fully}, or converted to closed form when applicable. 

\subsection{Optimistic OPPM with Multiple Predictors}

The OptOPPM algorithm is nearly optimal and can reduce the D-Gap with an accurate predictor sequence. 
However, it currently supports only one predictor, $h_t$. 
Given the continuously changing and uncertain nature of the environment, multiple prediction models are often under consideration. 
An online algorithm that can track the best predictor among many and stay optimal even when all predictors perform poorly would be more versatile. 
This section extends OptOPPM to support multiple predictors. 

Let's consider that there are $d$ models, denoted by $M^k$, $k=1\!:\!d$. 
In round $t$, each model $M^k$ provides its predictor $h_t^k$. 
Given that OptOPPM is limited to a single predictor, a logical approach is to compute the weighted average of these $d$ predictors, formulated as $h_t=\sum_{k=1}^d \omega_t^k h_t^k$, with the weights $\omega_t$ being derived from a specific implementation of ``learning from expert advice''. 

Consider employing the ``clipped'' variant of the Hedge algorithm to generate the weight coefficients. The clipped Hedge can be formalized as follows: 
\begin{equation*}
\begin{aligned}
\omega_{t+1}=\arg\min_{\omega\in \bigtriangleup_d^\alpha} \theta_t \left\langle L_{t}, \omega\right\rangle + \mathrm{KL}(\omega,\omega_{t}),
\end{aligned}
\end{equation*}
%or equivalently, 
%\begin{equation}
%\label{def:Hedge-update}
%\begin{aligned}
%\omega_{t+1}=\arg\min_{\omega\in\bigtriangleup_d^\alpha}\left\langle\ln \frac{\omega}{\omega_{t}\cdot\exp(-\theta_{t}L_t)}, \omega\right\rangle,
%\end{aligned}
%\end{equation}
where $\bigtriangleup_d^\alpha=\{w\in\mathbb{R}_+^d\mid \lVert w\rVert_1=1,\ w^i\geq\alpha/d,\ \forall i\in 1\!:\!d\}$, $L_t$ is a linearized loss, $\theta_t>0$ is the learning rate, and KL represents the Kullback-Leibler divergence. 

Detailed parameter settings are delineated in \cref{alg:MP-OptOPPM}, supported by the performance guarantees outlined in \cref{thm:MP-OptOPPM}. 
\begin{algorithm}[t]
\caption{Multi-Predictor Support Subroutine for OptOPPM}
\label{alg:MP-OptOPPM}
\algrenewcommand\algorithmicensure{\textbf{Initialize:}}
\begin{algorithmic}[1]
\Require All predictors satisfy \cref{ass:2-subgradient-bounded} and \cref{ass:err-bound}. 
\Ensure $T$, $\epsilon > 0$.
\For{$t=1,2,\cdots$ }
\State Receive a group of predictors $h_t^{1}, h_t^{2}, \cdots, h_t^{d}$
\State Provide predictor $h_t=\sum_{k=1}^d \omega_t^k h_t^k$ to OptOPPM
\vspace*{0.1em}\State Obtain $f_t$, $x_t$, $\widetilde{x}_{t+1}$, $y_t$, and $\widetilde{y}_{t+1}$ from \cref{alg:OptOPPM}
\IIf{$t>T$} $T\gets 2T$ \Comment{Doubling trick}
%\If{$t>T$}
%\State $T\gets 2T$ \Comment{Doubling trick}
%\EndIf
\vspace*{0.3em}\State $\displaystyle L_t=\left[\,\max\left\{\begin{aligned}&\,\big|f_t(x_t,y_t)-h_t^k(x_t,y_t)\big|,\,\\
&\,\big|f_t(\widetilde{x}_{t+1}, y_t)-h_t^k(\widetilde{x}_{t+1},y_t)\big|,\,\\
&\,\big|f_t(x_t,\widetilde{y}_{t+1})-h_t^k(x_t,\widetilde{y}_{t+1})\big|\,\end{aligned}\right\}\,\right]_{k\in 1:d}$, $\displaystyle\theta_{t}=\frac{\ln T}{\epsilon+\sum_{\tau=1}^{t-1}\sigma_{\tau}}$
\vspace*{0.1em}\State $\omega_{t+1} \!=\! \arg\min\nolimits_{\omega\in \bigtriangleup_d^{d/T}} \theta_t \left\langle L_{t}, \omega\right\rangle \!+\! \mathrm{KL}(\omega,\omega_{t})$, $\sigma_{t} \!=\! \langle L_{t}, \omega_{t}-\omega_{t+1}\rangle \!-\! \mathrm{KL}(\omega_{t+1}, \omega_{t})/\theta_{t}$
\EndFor
\end{algorithmic}
\end{algorithm}
\begin{property}
\label{ass:err-bound}
All prediction errors are bounded, that is, $\exists L_\infty<+\infty$, $\forall t$, $\forall k\in 1\!:\!d$, we have that $\left\lvert \rho\left(f_t,h_t^k\right)\right\rvert\leq L_\infty$. 
\end{property}
\begin{theorem}[Performance of {\cref{alg:MP-OptOPPM}}]
\label{thm:MP-OptOPPM}
Let there be $d$ predictors satisfying \cref{ass:err-bound}. 
If two players output strategy pairs according to \cref{alg:OptOPPM} and integrate the $d$ predictors following \cref{alg:MP-OptOPPM}, then 
\begin{equation*}
\begin{aligned}
\dualgap_T, \neregret_T 
\leq \widetilde{O}\left(\min\left\{V^1_T,\cdots,V^d_T, \sqrt{(1+C_T)T}\right\}\right),
\end{aligned}
\end{equation*}
where $V^k_T=\sum_{t=1}^{T}\rho\left(f_{t}, h^k_{t}\right)$ represents the cumulative error of the $k$-th predictor. 
\end{theorem}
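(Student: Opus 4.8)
The plan is to reduce the statement of \cref{thm:MP-OptOPPM} to two ingredients: (i) the single-predictor guarantee of \cref{thm:OptOPPM} applied to the aggregated predictor $h_t=\sum_{k=1}^d\omega_t^k h_t^k$, and (ii) a ``tracking-the-best-expert'' bound for the clipped Hedge subroutine of \cref{alg:MP-OptOPPM}. For (i), note that $h_t$ is convex-concave (a convex combination of convex-concave functions with nonnegative weights summing to one) and, under \cref{ass:2-subgradient-bounded} applied to each $h_t^k$, also satisfies \cref{ass:2-subgradient-bounded}; hence \cref{thm:OptOPPM} gives $\dualgap_T,\neregret_T\le O\bigl(\min\{V'_T,\sqrt{(1+\log_2 C_T+C_T)T}\}\bigr)$ with $V'_T=\sum_{t=1}^T\rho(f_t,h_t)$. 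The worst-case branch $\sqrt{(1+C_T)T}$ already matches the claimed bound up to the poly-log factor hidden by $\widetilde O$, so the entire task is to show $V'_T\le\widetilde O\bigl(\min_k V^k_T\bigr)$, i.e. the aggregated predictor's cumulative error is, up to logarithmic factors, no worse than that of the best fixed predictor.

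First I would bound $\rho(f_t,h_t)$ by a linear functional of the weights: since $f_t-h_t=\sum_k\omega_t^k(f_t-h_t^k)$ pointwise, a triangle/Jensen argument gives, for every point in the three evaluation points appearing in $L_t$, $|f_t-h_t|\le\sum_k\omega_t^k|f_t-h_t^k|\le\langle L_t,\omega_t\rangle$, where $L_t$ is exactly the vector of per-predictor worst-case errors over those three points defined in \cref{alg:MP-OptOPPM}. Summing, $V'_T\le\sum_{t=1}^T\langle L_t,\omega_t\rangle$. (Here I would be slightly careful that $\rho(f_t,h_t)$ in \cref{thm:OptOPPM} is a max over all of $X\times Y$, whereas the OptOPPM regret analysis in \cref{lem:OptOPPM} actually only needs the deviations $\delta^1_t,\delta^2_t$, which involve $f_t-h_t$ at the points $(x_t,y_t),(\widetilde x_{t+1},y_t),(x_t,\widetilde y_{t+1})$ — precisely the three points in the definition of $L_t$. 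So the right move is to re-derive the OptOPPM bound directly in terms of $\sum_t\langle L_t,\omega_t\rangle$ rather than going through $\rho$, which avoids an otherwise lossy step.)

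Next I would invoke the standard regret guarantee for the clipped (lower-bounded-simplex) Hedge update with the adaptive learning rate $\theta_t=\ln T/(\epsilon+\sum_{\tau<t}\sigma_\tau)$ and the adaptive potential terms $\sigma_t=\langle L_t,\omega_t-\omega_{t+1}\rangle-\mathrm{KL}(\omega_{t+1},\omega_t)/\theta_t$. This is the by-now-classical ``optimistic/implicit'' mirror-descent-over-the-simplex analysis (the $\sigma_t\ge0$ and the self-bounding telescoping are exactly the simplex analogue of the $\delta^i_t$ machinery used for OPPM/OptOPPM); it yields, for every $k$,
\begin{equation*}
\sum_{t=1}^T\langle L_t,\omega_t\rangle-\sum_{t=1}^T L_t^k\le \widetilde O\Bigl(L_\infty\sqrt{\textstyle\sum_{t=1}^T\langle L_t,\omega_t\rangle}\,\Bigr)+\widetilde O(1),
\end{equation*}
where the clipping to $\bigtriangleup_d^{d/T}$ contributes only an additional $O(d\cdot\frac{d}{T}\cdot L_\infty T\cdot\frac1T)=\widetilde O(1)$-type bias term and keeps $\mathrm{KL}(\cdot,\omega_1)\le\ln T$. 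Since $L_t^k\le$ (the per-point error of predictor $k$) and summing gives $\sum_t L_t^k\le c\, V^k_T$ for an absolute constant $c$ depending on $D_X,D_Y,G_X,G_Y$ (because $|f_t(\widetilde x_{t+1},y_t)-h^k_t(\widetilde x_{t+1},y_t)|\le\rho(f_t,h^k_t)$, etc.), solving the resulting self-bounding inequality in $S:=\sum_t\langle L_t,\omega_t\rangle$ gives $S\le \widetilde O(V^k_T)+\widetilde O(1)$ for every $k$, hence $S\le\widetilde O(\min_k V^k_T)$. Combining with $V'_T\le S$ and the OptOPPM bound completes the argument; finally, \cref{lem:reg-dualgap,lem:reg-nereg} (as already noted in the excerpt) transfer the individual-regret bound to $\dualgap_T$ and $\neregret_T$.

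\textbf{Main obstacle.} The delicate point is the interface between the two layers: I must make sure that feeding a \emph{weighted, time-varying} predictor into OptOPPM does not break the telescoping in \cref{lem:OptOPPM}, and that the quantity the Hedge layer actually controls, $\sum_t\langle L_t,\omega_t\rangle$, is genuinely an upper bound on the error term $\sum_t(\delta^1_t+\delta^2_t)$ (up to constants) that governs OptOPPM's regret. Concretely, one has to verify that the $-B_\phi(\widetilde x_{t+1},x^\phi_t)/\eta_t$ and $-B_\psi(\widetilde y_{t+1},y^\psi_t)/\gamma_t$ negative terms in $\delta^1_t,\delta^2_t$ only help, so that dropping the predictor-independent parts leaves something dominated by $|f_t-h_t|$ at the three relevant points, i.e. by $\langle L_t,\omega_t\rangle$; and then that the adaptive-learning-rate bookkeeping ($\Delta$/$\sigma$ definitions, $\epsilon$ for initialization, the doubling trick on $T$ and on $C^i$) composes cleanly, contributing only $\log$ factors. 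Everything else is routine: convexity/concavity and subgradient-boundedness of the mixture, the triangle inequality $V'_T\le\sum_t\langle L_t,\omega_t\rangle$, and the textbook clipped-Hedge regret bound.
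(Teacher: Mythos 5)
Your proposal is correct and follows essentially the same route as the paper: replace the lossy $\delta^i_t\le 2\rho(f_t,h_t)$ bound in the OptOPPM analysis with the tighter $\delta^i_t\le 2\langle L_t,\omega_t\rangle$, then control $\sum_t\langle L_t,\omega_t\rangle$ against $\sum_t\langle L_t,1_k\rangle\le\sum_t\rho(f_t,h^k_t)$ via the clipped-Hedge static regret lemma, and finish with \cref{lem:reg-dualgap,lem:reg-nereg}. Your parenthetical self-correction about $\rho$ being a max over all of $X\times Y$ while only the three evaluation points in $L_t$ matter is exactly the paper's key observation, so no gap remains.
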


\cref{alg:MP-OptOPPM} is designed to extend multi-predictor support for OptOPPM. 
We refer to the integration of \cref{alg:OptOPPM,alg:MP-OptOPPM} as \emph{OptOPPM with multiple predictors}. 
%This algorithm ensures that, in instances where any predictor yields an accurate forecast, there will be an accelerated convergence of the players’ strategy pairs toward saddle points. Conversely, in instances where all predictors fail, the algorithm reverts to the performance applicable in the worst-case scenario. 

%For the clipped Hedge iteration, %~(as shown in \cref{def:Hedge-update}), 
%an efficient solution is attainable by minor adjustments to the algorithm depicted in Figure~3 of \citet{herbster2001tracking}. The modified algorithm is elaborated in \cref{alg:negtive-entropy-optimization-solver}, wherein the primary alteration is the removal of the constraint $\lVert W\rVert_1=1$. 
For the clipped Hedge, an efficient solution involves a slight modification to the algorithm depicted in Figure~3 of \citet{herbster2001tracking}. Refer to Algorithm~4 
%\cref{alg:negtive-entropy-optimization-solver} 
in \cite{2024arXiv240704591M}. 
%\cref{app:algo} 

\section{Experiments}

This section validates the effectiveness of our algorithms through experimentation and demonstrates the cancellation phenomenon that occurs within the absolute value of NE-Reg. 
\cref{sec:osp-occo-setup} describes the experimental setup, and \cref{sec:osp-occo-results} presents the experimental results. 

\subsection{Setup}
\label{sec:osp-occo-setup}

\begin{table}[t]
\centering%\setlength{\tabcolsep}{1em}
\caption{Four Settings for Synthesis Experiments. In this table, $z_1(t)=\ln(1+t)$ is a logarithmic growth function of $t$, and $z_2(t)=\ln\ln(\mathrm{e}+t)$ is a log-logarithmic growth function of $t$. As time $t$ progresses, the increments of $z_1$ and $z_2$ slow down. We represent the saddle point $(x_t^*, y_t^*)$ in the complex form $x_t^*+i y_t^*$, where $i$ is the imaginary unit and satisfying the equation $i^{2}=-1$. }
\label{tab:rules}
\begin{tabular}{ccccc}
\toprule
Case & \hypertarget{ruleI}{I} & \hypertarget{ruleII}{II} & \hypertarget{ruleIII}{III} & \hypertarget{ruleIV}{IV} \\
\midrule
$x_t^*+i y_t^*$ & $z_2(t)\mathrm{e}^{i z_1(t)}$ & $z_2(t)\mathrm{e}^{i\pi t+i z_2(t)}$ & $z_2(t)\mathrm{e}^{i\frac{2\pi}{3} t+i z_2(t)}$
& $\sqrt{2}\mathrm{e}^{i\left(\frac{8\pi}{9}+\arg(x_t+i y_t)\right)}$ \\[3pt]
%\midrule
Trajectories & 
\begin{tabular}{c}
\includegraphics[width=0.185\textwidth]{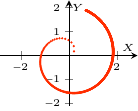}
\end{tabular} &
\begin{tabular}{c}
\includegraphics[width=0.185\textwidth]{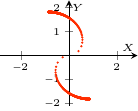}
\end{tabular} &
\begin{tabular}{c}
\includegraphics[width=0.185\textwidth]{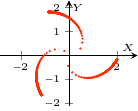}
\end{tabular} &
\begin{tabular}{c}
\includegraphics[width=0.185\textwidth]{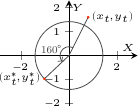}
\end{tabular} \\
Property & $\rho(f_t,f_{t-1})\rightarrow 0$ & $\rho(f_t,f_{t-2})\rightarrow 0$ & $\rho(f_t,f_{t-3})\rightarrow 0$ & Adversarial \\
\bottomrule
\end{tabular}
\end{table}

Consider the following synthesis problem: in round $t$, Players~1 and~2 jointly select a strategy pair $(x_t, y_t)\in X\times Y$, where $X=\left[-4, 4\right]$ and $Y=\left[-4, 4\right]$, and then the environment feeds back a convex-concave payoff function $f_t$:
\begin{equation*}
\begin{aligned}
f_t\left(x,y\right)=\frac{1}{2}\left(x-x_t^*\right)^2-\frac{1}{2}\left(y-y_t^*\right)^2+\left(x-x_t^*\right)\left(y-y_t^*\right), 
\end{aligned}
\end{equation*}
where $(x_t^*,y_t^*)\in X\times Y$ is the saddle point of $f_t$. 
It is evident that \cref{ass:X-Y-bounded,ass:2-subgradient-bounded} are satisfied. 
To determine the payoff function $f_t$, it suffices to fix the saddle point $(x_t^*,y_t^*)$. 
We set up four cases, which are listed in \cref{tab:rules}. 
Case~\hyperlink{ruleI}{I} is characterized by an asymptotic stability, where the saddle points of the payoff functions exhibit a decelerating trend over time. 
Cases~\hyperlink{ruleII}{II} and~\hyperlink{ruleIII}{III} display traits of periodic oscillation; the saddle points under Case~\hyperlink{ruleII}{II} vacillate between two branches, whereas those under Case~\hyperlink{ruleIII}{III} alternate among three branches. 
%It is noteworthy that with the progression of time $t$, the incremental values of $z_1$ and $z_2$ tend to decelerate. 
%Thus, Cases~\hyperlink{ruleI}{I}, \hyperlink{ruleII}{II}, and~\hyperlink{ruleIII}{III} each correspond to benign environments where $\rho(f_t,f_{t-1})\rightarrow 0$, $\rho(f_t,f_{t-2})\rightarrow 0$, and $\rho(f_t,f_{t-3})\rightarrow 0$, respectively. 
Case~\hyperlink{ruleIV}{IV} is indicative of an adversarial setting. As shown in its figure, upon the players' selection of the strategy pair $(x_t,y_t)$, the environment engenders the saddle point $(x_t^*,y_t^*)$ by initially rotating the strategy pair by $8\pi/9$, followed by its projection onto a circle with a radius of $\sqrt{2}$. 
This setting results in the absence of any algorithm that approximates saddle points. 

Next, we proceed to instantiate three algorithms: OPPM, OptOPPM, and OptOPPM with multiple predictors. 
Let $\phi(x)=x^2/2$ and $\psi(y)=y^2/2$. Consequently, both $B_\phi$ and $B_\psi$ are bounded and exhibit Lipschitz continuity with respect to their first variables. 
All algorithms are initialized with random initial values, as previous analysis has shown that all metric bounds are invariant to initial conditions. 
To circumvent an infinitely large initial learning rate, we set $\epsilon=0.1$. 
For the OptOPPM algorithm, we employ the predictor $h_t=f_{t-4}$, enabling it to attain a sharp metric bound of $O(1)$ in environments that are either stationary or exhibit periodicity with cycles of 2 or 4. 
For the OptOPPM with multiple predictors, we configure three predictors: $h_t^1=f_{t-4}$, $h_t^2=f_{t-5}$, and $h_t^3=f_{t-6}$, allowing it to enjoy a sharp metric bound of $\widetilde{O}(1)$ in environments that are stationary or have periodicity with cycles of 2, 3, 4, 5, or 6. 

Finally, we adopt the algorithm delineated by \citet{zhang2022noregret} as the foundational benchmark for comparison. 

\begin{figure*}[ht]
\centering\captionsetup[subfigure]{aboveskip=-0.1em,belowskip=-1.3em}
\begin{subfigure}[t]{0.48\textwidth}
\caption{Average D-Gaps in Case~\hyperlink{ruleI}{I}\\~}
\centering
\includegraphics[width=0.95\textwidth]{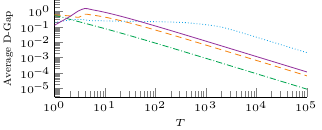}
\label{fig:rule1-dg}
\end{subfigure}
\begin{subfigure}[t]{0.48\textwidth}
\caption{Average NE-Regs in Case~\hyperlink{ruleI}{I}\\~}
\centering
\includegraphics[width=0.95\textwidth]{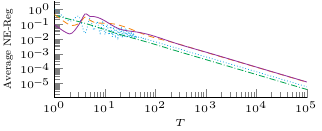}
\label{fig:rule1-ne}
\end{subfigure}
\begin{subfigure}[t]{0.48\textwidth}
\caption{Average D-Gaps in Case~\hyperlink{ruleII}{II}\\~}
\centering
\includegraphics[width=\textwidth]{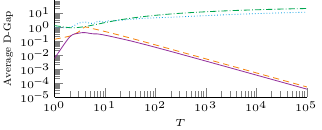}
\label{fig:rule2-dg}
\end{subfigure}
\begin{subfigure}[t]{0.48\textwidth}
\caption{Average NE-Regs in Case~\hyperlink{ruleII}{II}\\~}
\centering
\includegraphics[width=0.95\textwidth]{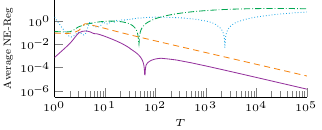}
\label{fig:rule2-ne}
\end{subfigure}
\\[-1em]
\begin{subfigure}[t]{0.48\textwidth}
\caption{Average D-Gaps in Case~\hyperlink{ruleIII}{III}\\~}
\centering
\includegraphics[width=0.95\textwidth]{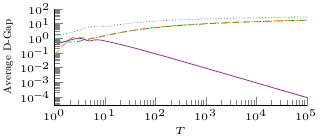}
\label{fig:rule3-dg}
\end{subfigure}
\begin{subfigure}[t]{0.48\textwidth}
\caption{Average NE-Regs in Case~\hyperlink{ruleIII}{III}\\~}
\centering
\includegraphics[width=0.95\textwidth]{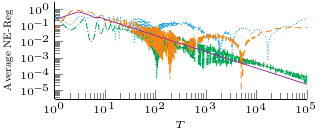}
\label{fig:rule3-ne}
\end{subfigure}
\begin{subfigure}[t]{0.48\textwidth}
\caption{Average D-Gaps in Case~\hyperlink{ruleIV}{IV}\\~}
\centering
\includegraphics[width=0.95\textwidth]{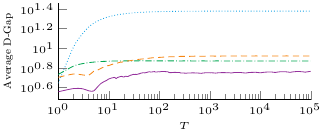}
\label{fig:rule4-dg}
\end{subfigure}
\begin{subfigure}[t]{0.48\textwidth}
\caption{Average NE-Regs in Case~\hyperlink{ruleIV}{IV}\\~}
\centering
\includegraphics[width=0.95\textwidth]{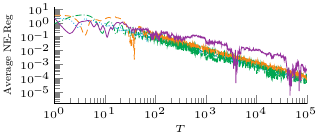}
\label{fig:rule4-ne}
\end{subfigure}
\\[-0.3em]
\newcommand{\irplotA}{\raisebox{2pt}{\tikz{\draw[Cerulean,densely dotted, thick](0,0) -- (6.9mm,0);}}}
\newcommand{\irplotB}{\raisebox{2pt}{\tikz{\draw[Green,densely dashdotted, thick](0,0) -- (6.9mm,0);}}}
\newcommand{\irplotC}{\raisebox{2pt}{\tikz{\draw[BurntOrange,densely dashed, thick](0,0) -- (6.9mm,0);}}}
\newcommand{\irplotD}{\raisebox{2pt}{\tikz{\draw[Purple,solid, thick](0,0) -- (6.9mm,0);}}}
\begin{subfigure}[t]{0.99\textwidth}
\centering\vspace*{-1em}
\sbox1{\irplotA}\sbox2{\irplotB}\sbox3{\irplotC}\sbox4{\irplotD}%
\begin{equation*}
\begin{aligned}
&\scriptsize\usebox1~\textup{The algorithm delineated by {\citet{zhang2022noregret}}} &&\quad
\scriptsize\usebox2~\textup{OPPM} \\[-3pt]
&\scriptsize\usebox3~\textup{OptOPPM} &&\quad
\scriptsize\usebox4~\textup{OptOPPM with multiple predictors}
\end{aligned}
\end{equation*}
\end{subfigure}
%\vspace*{0.8em}
\caption{Average D-Gaps and Average NE-Regs of Algorithms}
\label{fig:osp}
\end{figure*}
%%%%%%%%%%%%%%%%%%%%%%%%%%%%%%%%%%%%%%%%

\subsection{Results}
\label{sec:osp-occo-results}

We run the repeated game for $10^5$ rounds and record the time averaged duality gap $\frac{1}{T}\dualgap_T$ and the time averaged dynamic Nash equilibrium regret $\frac{1}{T}\neregret_T$. 
The trajectory of the average duality gap approaches zero, indicating that the players' decisions in most rounds are close to saddle points. 

All experimental outcomes align with theoretical expectations. 
In the benign environment of Case~\hyperlink{ruleI}{I}, the average D-Gap and NE-Reg trajectories of all algorithms converge (see \cref{fig:rule1-dg} and \cref{fig:rule1-ne}). 
In the periodically oscillatory environment of Case~\hyperlink{ruleII}{II}, the average D-Gap and NE-Reg trajectories of OptOPPM and OptOPPM with multiple predictors gradually converge (see \cref{fig:rule2-dg} and \cref{fig:rule2-ne}). 
In the oscillatory scenario of Case~\hyperlink{ruleIII}{III}, the average D-Gap and NE-Reg trajectories of OptOPPM with multiple predictors converge (see \cref{fig:rule3-dg} and \cref{fig:rule3-ne}). 
Notably, the average NE-Reg trajectory of OPPM converges amidst pronounced fluctuations, indicating internal cancellation within the absolute value of NE-Reg (see \cref{fig:rule3-ne}). 
In the adversarial setting of Case~\hyperlink{ruleIV}{IV}, no algorithm approximates the saddle points, resulting in non-converging average D-Gap trajectories (see \cref{fig:rule4-dg}). However, the average NE-Reg trajectories exhibit intense fluctuations, indicating internal cancellation within the absolute value of NE-Reg (see \cref{fig:rule4-ne}). 
The results show that OPPM and its variants outperform the algorithm by \citet{zhang2022noregret} in D-Gap performance. Notably, OptOPPM with multiple predictors consistently approximates saddle points in Cases~\hyperlink{ruleI}{I}, \hyperlink{ruleII}{II}, and \hyperlink{ruleIII}{III}.

Given the insights garnered from both theoretical analysis and empirical results, it seems prudent to reconsider the reliance on NE-Reg as a metric.

\section{Conclusion}

This study addresses the online saddle point problem by introducing three adaptations of the proximal point method: OPPM, OptOPPM, and OptOPPM with multiple predictors. 
These methods are crafted to secure upper bounds on D-Gap and NE-Reg, ensuring near-optimal performance in relation to D-Gap. 
In favorable conditions, such as stationary payoff functions, they maintain near-constant metric bounds. The study also questions the reliability of NE-Reg as a metric and validates the algorithms' effectiveness through experiments.

%\begin{credits}
\subsubsection{Acknowledgement}
This preprint has no post-submission improvements or corrections. The Version of Record of this contribution is published in the Neural Information Processing, ICONIP 2024 Proceedings and is available online at \url{https://doi.org/10.1007/978-981-96-6579-2\_27}. 
%\end{credits}
%
% ---- Bibliography ----
%
% BibTeX users should specify bibliography style 'splncs04'.
% References will then be sorted and formatted in the correct style.
%
\bibliographystyle{splncs04}
\bibliography{reference}

\clearpage
\appendix
\onecolumn
\begin{center}{
\Large\textbf{Technical Appendix}
\par}
\end{center}
\small
\section{Proof of \cref{thm:lower-bound}} 

\begin{proof}%[Proof of \cref{thm:lower-bound}]
Let $\mathcal{F}$ be the set of all convex-concave functions satisfying Assumption~A2, and let $\mathcal{L}_X^{G_X}=\big\{\ell\,\big|\,\ell(x)+0y\in\mathcal{F}\big\}$, let $\mathcal{L}_Y^{G_X}=\big\{\ell\,\big|\,0x-\ell(y)\in\mathcal{F}\big\}$. 
Formally, we need to prove that
\begin{equation*}
\begin{aligned}
\exists f_{1:T}\in\mathcal{F},\quad\text{such that}\quad\sum_{t=1}^T\max_{y\in Y} f_t\left(x_t, y\right)-\sum_{t=1}^T\min_{x\in X}f_t\left(x, y_t\right)\geq\symit{\Omega}\big(\sqrt{(1+C_T)T}\big). 
\end{aligned}
\end{equation*}
Indeed, we may choose $f_t(x,y)=\alpha_t(x)-\beta_t(y)\in\mathcal{F}$, where $\alpha_t$ and $\beta_t$ are both convex functions. 
$\forall C\in[0,T(D_X+D_Y)]$ and $\forall P\in[0,C]$, $\exists\alpha_{1:T}$ and $\exists \beta_{1:T}$, such that 
\begin{equation}
\label{eqpf:lb}
\begin{aligned}
\sum_{t=1}^T\max_{y\in Y}\ & f_t\left(x_t, y\right)-\sum_{t=1}^T\min_{x\in X}f_t\left(x, y_t\right)\\
=\ &\max_{\forall u_{1:T}\in X,\forall v_{1:T}\in Y}\sum_{t=1}^T \Bigl(f_t\left(x_t, v_t\right)- f_t\left(u_t, y_t\right)\Bigr) \\
\geq\ &\max_{\sum_{t=1}^T\left\lVert u_t-u_{t-1}\right\rVert\leq P,\, \sum_{t=1}^T\left\lVert v_t-v_{t-1}\right\rVert\leq C-P}\sum_{t=1}^T \Big(f_t\left(x_t,v_t\right)- f_t\left(u_t,y_t\right)\Big) \\
=\ &\max_{\sum_{t=1}^T\left\lVert u_t-u_{t-1}\right\rVert\leq P}\sum_{t=1}^T \Big(\alpha_t\left(x_t\right)- \alpha_t\left(u_t\right)\Big) \\
&+\max_{\sum_{t=1}^T\left\lVert v_t-v_{t-1}\right\rVert\leq C-P}\sum_{t=1}^T \Big(\beta_t\left(y_t\right)- \beta_t\left(v_t\right)\Big) \\
\geq\ &\symit{\Omega}\left(\sqrt{(1+P)T}\right)+\symit{\Omega}\left(\sqrt{(1+C-P)T}\right) \\
=\ &\symit{\Omega}\left(\sqrt{(1+C)T}\right),
\end{aligned}
\end{equation}
where the last ``$\leq$'' follows from \cref{lem:lower-bound}. 
Note that \cref{eqpf:lb} holds for arbitrary $C\in[0,T(D_X+D_Y)]$. 
Choosing $C=C_T$ yields the desired result. 
\end{proof}

\begin{lemma}[Theorem~2 of {\citet{zhang2018adaptive}}]
\label{lem:lower-bound}
In the context of Online Convex Optimization, let the feasible set $X$ be compact and convex, and let $\mathcal{L}_X^G$ be the set of all convex loss functions defined on $X$ with subgradients bounded by $G$. 
Regardless of the strategy adopted by the player, there always exists a comparator sequence $u_{1:T}$ satisfying $\sum_{t=1}^T\left\lVert u_t-u_{t-1}\right\rVert\leq P$, and a sequence of loss functions $\ell_{1:T}\in\mathcal{L}_X^G$, ensuring that the regret is not less than $\symit{\Omega}\big(\sqrt{(1+P)T}\big)$.
\end{lemma}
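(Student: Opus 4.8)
The plan is to reproduce the classical block lower bound for dynamic regret. Since $X$ is compact and convex with diameter $D_X$, fix two points $a,b\in X$ with $\lVert a-b\rVert$ equal to a value $\ell\le D_X$ to be chosen, and set $e=(b-a)/\ell$. Partition the horizon $1\!:\!T$ into $K$ consecutive blocks of equal length $m=T/K$ (rounding when $K\nmid T$). On every round $t$ the adversary plays the \emph{linear} loss $\ell_t(x)=\langle \sigma_t G e,\,x\rangle$ with a sign $\sigma_t\in\{-1,+1\}$ drawn i.i.d.\ uniformly and \emph{independently of the player's decisions}; each such $\ell_t$ is convex with subgradient norm exactly $G$, hence lies in $\mathcal{L}_X^G$.

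The core estimate is per block. For any (possibly adaptive) player strategy, $x_t$ depends only on $\sigma_1,\dots,\sigma_{t-1}$, so $\mathbb{E}[\ell_t(x_t)]=\langle \mathbb{E}[\sigma_t]Ge,\,\mathbb{E}[x_t]\rangle=0$ and the player's expected cumulative loss is zero on every block. On a block $\mathcal{B}$, however, $\sum_{t\in\mathcal{B}}\ell_t(x)=G\big(\sum_{t\in\mathcal{B}}\sigma_t\big)\langle e,x\rangle$ is linear in $x$ and minimised at an endpoint of $[a,b]$, whence $\mathbb{E}\big[\min_x\sum_{t\in\mathcal{B}}\ell_t(x)\big]=-\tfrac{G\ell}{2}\,\mathbb{E}\big|\sum_{t\in\mathcal{B}}\sigma_t\big|$, and the Khintchine (Rademacher anti-concentration) bound $\mathbb{E}\big|\sum_{t=1}^m\sigma_t\big|=\symit{\Omega}(\sqrt m)$ gives an expected per-block regret of $\symit{\Omega}(G\ell\sqrt m)$. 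Because the signs in different blocks are independent and the optimal comparator in a block uses only that block's losses, the per-block regrets add: against the concatenated, piecewise-constant comparator $u_{1:T}$ (equal to the block-wise minimiser on each block) the expected total regret is $\symit{\Omega}(G\ell\,K\sqrt m)=\symit{\Omega}(G\ell\sqrt{KT})$. Averaging over the randomness then produces a \emph{deterministic} sign sequence, hence a deterministic $\ell_{1:T}\in\mathcal{L}_X^G$, attaining this bound.

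It remains to match the path-length budget $P$. The comparator $u_{1:T}$ changes at most $K-1$ times, by at most $\ell$ each, so $\sum_{t=1}^T\lVert u_t-u_{t-1}\rVert\le (K-1)\ell$. If $P\ge D_X$, take $K-1=\lceil P/D_X\rceil$ and $\ell=P/(K-1)\le D_X$, which forces the path length $\le P$ while keeping $\ell=\symit{\Omega}(D_X)$ and $K=\symit{\Omega}(1+P/D_X)$; substituting yields regret $\symit{\Omega}(\sqrt{(1+P)T})$ once $G$ and $D_X$ are absorbed into constants. If $P<D_X$, take $K=1$ and $\ell=D_X$, recovering the standard static lower bound $\symit{\Omega}(\sqrt T)=\symit{\Omega}(\sqrt{(1+P)T})$ since then $1+P=\symit{\Theta}(1)$. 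This is precisely the form needed in the proof of \cref{thm:lower-bound}, where $P$ is instantiated as $C_T$ (and separately as $P$ and $C_T-P$ for the two coordinates).

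I expect the only real obstacle to be the adaptivity of the player: since $\ell_t$ must be issued without seeing $x_t$, the bound cannot be made pointwise and must go through the expectation/Yao-type argument, with the anti-concentration estimate $\mathbb{E}\big|\sum_{t=1}^m\sigma_t\big|=\symit{\Theta}(\sqrt m)$ carrying the quantitative content. The remaining work is bookkeeping — rounding $K$, rescaling the segment length $\ell$ to fit an arbitrary $P\in[0,T D_X]$, and noting the comparator stays in $X$ automatically since $[a,b]\subseteq X$ by convexity.
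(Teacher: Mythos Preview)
The paper does not prove this lemma itself; it is quoted verbatim as Theorem~2 of \citet{zhang2018adaptive} and invoked as a black box in the proof of \cref{thm:lower-bound}. Your block construction with i.i.d.\ Rademacher-signed linear losses along a fixed segment, the Khintchine estimate $\mathbb{E}\big\lvert\sum_{t=1}^m\sigma_t\big\rvert=\symit{\Theta}(\sqrt m)$ per block, summation over the $K$ blocks, and the final tuning of $K$ and the segment length $\ell$ to meet the path-length budget $P$ is correct and is exactly the argument given in \citet{zhang2018adaptive}.
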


\section{Proof of \cref{lem:OPPM}}
\label{app:Analysis-OPPM}

\begin{proof}%[Proof of \cref{lem:OPPM}]
The first-order optimality condition of OPPM implies that 
\begin{equation*}
\begin{aligned}
&\exists\nabla_x f_t(x_{t+1},y_{t+1}),&&\forall x'\in X,&&
\textstyle\big\langle \eta_t \nabla_x f_t(x_{t+1},y_{t+1})+x_{t+1}^\phi-x_{t}^\phi, x_{t+1}-x'\big\rangle\leq 0, \\
&\exists\nabla_y (-f_t)(x_{t+1},y_{t+1}),&&\forall y'\in Y,&&
\textstyle\big\langle \gamma_t \nabla_y (-f_t)(x_{t+1},y_{t+1})+y_{t+1}^\psi-y_{t}^\psi, y_{t+1}-y'\big\rangle\leq 0.
\end{aligned}
\end{equation*}
Let's take Player~$1$ as an example.
The identity transformation on the instantaneous individual regret is as follows: 
\begin{equation}
\label{eqpf:iomda-individual-regret-decomposition}
\begin{aligned}
f_t(x_t,y_t)-f_t(x'_{t}, y_t)
=\ &f_t(x_t,y_t)-f_t(x_{t+1}, y_{t+1})+f_t(x'_{t+1}, y_{t+1})-f_t(x'_{t}, y_t) \\
&+\underbrace{f_t(x_{t+1}, y_{t+1})-f_t(x'_{t+1}, y_{t+1})}_{\term{a}}. 
\end{aligned}
\end{equation}
Applying convexity and first-order optimality condition, we get
\begin{equation*}
\begin{aligned}
\termref{eqpf:iomda-individual-regret-decomposition}{a}
&\leq\left\langle\nabla_x f_t(x_{t+1}, y_{t+1}), x_{t+1}-x'_{t+1}\right\rangle 
\leq\frac{1}{\eta_t }\left\langle x_t^\phi-x_{t+1}^\phi, x_{t+1}-x'_{t+1}\right\rangle \\
&=\frac{1}{\eta_t }\left[B_{\phi}\big(x'_{t+1}, x_{t}^\phi\big)-B_{\phi}\big(x'_{t+1}, x_{t+1}^\phi\big) -B_{\phi}\big(x_{t+1}, x_{t}^\phi\big)\right].
\end{aligned}
\end{equation*}
Summing \cref{eqpf:iomda-individual-regret-decomposition} over time yields
\begin{equation}
\label{eqpf:iomda-individual-regret}
\begin{aligned}
\regret^1_T
\leq\ &
\underbrace{\sum_{t=1}^{T}\frac{1}{\eta_t}\Bigl(B_{\phi}\big(x'_{t+1}, x_{t}^\phi\big)-B_{\phi}\big(x'_{t+1}, x_{t+1}^\phi\big)\Bigr)}_{\term{a}}\\
&+\underbrace{\sum_{t=1}^{T}\Bigl(f_t(x_t,y_t)-f_t(x_{t+1}, y_{t+1})+f_t(x'_{t+1}, y_{t+1})-f_t(x'_{t}, y_t)\Bigr)}_{\term{b}}.
\end{aligned}
\end{equation}
Since the learning rate $\eta_t$ does not increase, $B_\phi$ is upper bounded by $L_\phi D_X$ and is $L_\phi$-Lipschitz for the first variable, we have that 
\begin{equation*}
\begin{aligned}
&\!\!\termref{eqpf:iomda-individual-regret}{a} \\
&\leq \sum_{t=1}^{T}\frac{1}{\eta_t}\left(B_{\phi}\big(x'_{t+1}, x_{t}^\phi\big)-B_{\phi}\big(x'_{t}, x_{t}^\phi\big)\right) + \frac{B_{\phi}\big(x'_{1}, x_{1}^\phi\big)}{\eta_0} + \sum_{t=1}^{T}\Big(\frac{1}{\eta_{t}}-\frac{1}{\eta_{t-1}}\Big)B_{\phi}\big(x'_{t}, x_{t}^\phi\big) \\
&\leq\frac{L_\phi D_X}{\eta_T}+\sum_{t=1}^{T}\frac{L_\phi}{\eta_t}\left\lVert x'_{t+1}-x'_{t}\right\rVert
\leq \frac{2 L_\phi D_X}{\eta_T}+\sum_{t=1}^{T}\frac{L_\phi}{\eta_t}\left\lVert x'_{t}-x'_{t-1}\right\rVert.
\end{aligned}
\end{equation*}
Let $\symit{\Sigma}_T^1=\left(\termref{eqpf:iomda-individual-regret}{b}\right)_+$, then \cref{eqpf:iomda-individual-regret} can be rearranged as follows: 
\begin{equation*}
\begin{aligned}
\regret^1_T\leq L_\phi\left(2\frac{D_X}{\eta_T}+\sum_{t=1}^{T}\frac{1}{\eta_t}\left\lVert x'_{t}-x'_{t-1}\right\rVert\right)+\symit{\Sigma}_T^1.
\end{aligned}
\end{equation*}
Likewise, 
\begin{equation*}
\begin{aligned}
\regret^2_T\leq L_\psi\left(2\frac{D_Y}{\gamma_T}+\sum_{t=1}^{T}\frac{1}{\gamma_t}\left\lVert y'_t-y'_{t-1}\right\rVert\right)+\symit{\Sigma}_T^2,
\end{aligned}
\end{equation*}
where 
\begin{equation*}
\begin{aligned}
\symit{\Sigma}_T^2=\left(\sum_{t=1}^{T}\Bigl(f_t(x_t,y'_t)-f_t(x_{t+1},y'_{t+1})+f_t(x_{t+1},y_{t+1})-f_t(x_t,y_t)\Bigr)\right)_+.
\end{aligned}
\end{equation*}
Let $L=\max\{L_\phi,L_\psi\}$, $D=\max\{D_X,D_Y\}$, $C_T=\sum_{t=1}^T \left(\left\lVert x'_{t}-x'_{t-1}\right\rVert + \left\lVert y'_t-y'_{t-1}\right\rVert\right)$, and let learning rates satisfy $\eta_t=\gamma_t$. 
Consequently, the individual regrets are subject to the following public upper bound: 
\begin{equation*}
\begin{aligned}
\regret^1_T, \regret^2_T \leq L\frac{2D+C_T}{\eta_T}+\max\left\{\symit{\Sigma}_T^1,\symit{\Sigma}_T^2\right\}.
\end{aligned}
\end{equation*}
Let $\symit{\Sigma}_T = \max\left\{\symit{\Sigma}_T^1,\symit{\Sigma}_T^2\right\}$, $\Delta_t=(\symit{\Sigma}_t-\max_{\tau\in 1:t-1}\symit{\Sigma}_{\tau})_+$. 
We claim that $\sum_{\tau=1}^t \Delta_\tau = \max_{\tau\in 1:t}\symit{\Sigma}_{\tau}$. 
This claim can be proved by induction. 
It is obvious that $\Delta_1=\symit{\Sigma}_1$. 
Now we assume the claim holds for $t-1$ and prove it for $t$: 
\begin{equation*}
\begin{aligned}
\sum_{\tau=1}^t \Delta_\tau
=\Delta_t + \sum_{\tau=1}^{t-1} \Delta_\tau
&=\Big(\symit{\Sigma}_t-\max_{\tau\in 1:t-1}\symit{\Sigma}_{\tau}\Big)_+ + \max_{\tau\in 1:t-1}\symit{\Sigma}_{\tau} \\
&=\begin{cases}
\symit{\Sigma}_t & \symit{\Sigma}_t\geq\max_{\tau\in 1:t-1}\symit{\Sigma}_{\tau} \\
\max_{\tau\in 1:t-1}\symit{\Sigma}_{\tau} & \symit{\Sigma}_t<\max_{\tau\in 1:t-1}\symit{\Sigma}_{\tau}
\end{cases} 
= \max_{\tau\in 1:t}\symit{\Sigma}_{\tau}.
\end{aligned}
\end{equation*}
Let $C$ be a preset upper bound of $C_T$, then applying the prescribed learning rate yields 
\begin{equation}
\label{eqpf:OPPM-bound-all}
\begin{aligned}
\regret^1_T, \regret^2_T \leq \epsilon+2\sum_{t=1}^{T} \Delta_t
= \epsilon+2\max_{t\in 1:T}\symit{\Sigma}_{t}. 
\end{aligned}
\end{equation}
Next, we estimate the upper bound of the r.h.s. of the above inequality in two ways. 
The firse one: 
\begin{equation}
\label{eqpf:OPPM-bound1}
\begin{aligned}
\max_{t\in 1:T}\symit{\Sigma}_{t}\leq O\left(1\right)+2\sum_{t=1}^{T}\rho\left( f_{t},f_{t-1}\right). 
\end{aligned}
\end{equation}
Indeed, 
\begin{equation*}
\begin{aligned}
\symit{\Sigma}_{t}^1
=\ &\bigg(\sum_{\tau=1}^{t}\Big(f_\tau(x_\tau,y_\tau)-f_\tau(x_{\tau+1}, y_{\tau+1})+f_\tau(x'_{\tau+1}, y_{\tau+1})-f_\tau(x'_\tau, y_\tau)\Big)\bigg)_+ \\
\leq\ &\Big|f_{0}(x_1,y_1)-f_{0}(x'_1,y_1)-f_{t}(x_{t+1}, y_{t+1})+f_{t}(x'_{t+1}, y_{t+1})\Big| \\
&+ \sum_{\tau=1}^{t}\Big|f_{\tau}(x_\tau,y_\tau)-f_{\tau-1}(x_{\tau}, y_{\tau})\Big|+\sum_{\tau=1}^{t}\Big|f_{\tau}(x'_\tau,y_\tau)-f_{\tau-1}(x'_{\tau}, y_{\tau})\Big| \\
\leq\ & 2\left(D_X G_X+D_Y G_Y\right) + 2\sum_{\tau=1}^{t}\rho\left( f_{\tau},f_{\tau-1}\right),
\end{aligned}
\end{equation*}
where the last ``$\leq$'' follows from \cref{cor:payoff-bound}. 
Likewise, 
\begin{equation*}
\begin{aligned}
\symit{\Sigma}_{t}^2
\leq 2\left(D_X G_X+D_Y G_Y\right) + 2\sum_{\tau=1}^{t}\rho\left( f_{\tau},f_{\tau-1}\right).
\end{aligned}
\end{equation*}
So \cref{eqpf:OPPM-bound1} holds. 
The other one: 
\begin{equation}
\label{eqpf:OPPM-bound2}
\begin{aligned}
\sum_{t=1}^{T}\Delta_t\leq O\left(\sqrt{(1+C)T}\right). 
\end{aligned}
\end{equation}
The derivation is as follows. 
According to \cref{cor:OPPM-stability}, $\left\lVert x_t-x_{t+1}\right\rVert\leq\min\{D_X, \eta_t G_X\}$, and $\left\lVert y_t-y_{t+1}\right\rVert\leq\min\{D_Y, \eta_t G_Y\}$. 
Let $G=\max\{G_X,G_Y\}$. 
Thus we have that 
\begin{equation*}
\begin{aligned}
\bigl(\symit{\Sigma}_{t}^1-\symit{\Sigma}_{t-1}^1\bigr)_+\!\!
\leq\ &\big(f_t(x_t,y_t)-f_t(x_{t+1}, y_{t+1})+f_t(x'_{t+1}, y_{t+1})-f_t(x'_{t}, y_t)\big)_+ \\
\leq\ &\big|f_t(x_t,y_t)-f_t(x_{t}, y_{t+1})\big| 
+\big|f_t(x_{t}, y_{t+1})-f_t(x_{t+1}, y_{t+1})\big| \\
&+\big|f_t(x'_{t+1}, y_{t+1})-f_t(x'_{t+1}, y_t)\big|
+\big|f_t(x'_{t+1}, y_t)-f_t(x'_{t}, y_t)\big| \\
\leq\ & G_Y \left\lVert y_t-y_{t+1}\right\rVert+G_X \left\lVert x_t-x_{t+1}\right\rVert + G_Y \left\lVert y_t-y_{t+1}\right\rVert + G_X \left\lVert x'_t-x'_{t+1}\right\rVert \\
\leq\ & \min\left\{ D_X G_X + 2D_Y G_Y,\eta_t \left(G_X^2 + 2G_Y^2 \right)\right\} + G_X \left\lVert x'_t-x'_{t+1}\right\rVert \\
\leq\ & \min\left\{ 3D G , 3\eta_t G^2 \right\} + G \left(\left\lVert x'_t-x'_{t+1}\right\rVert + \left\lVert y'_t-y'_{t+1}\right\rVert\right).
\end{aligned}
\end{equation*}
Likewise, 
\begin{equation*}
\begin{aligned}
\bigl(\symit{\Sigma}_{t}^2-\symit{\Sigma}_{t-1}^2\bigr)_+
\leq \min\left\{ 3D G , 3\eta_t G^2 \right\} + G \left(\left\lVert x'_t-x'_{t+1}\right\rVert + \left\lVert y'_t-y'_{t+1}\right\rVert\right).
\end{aligned}
\end{equation*}
Note that $\symit{\Sigma}_t-\symit{\Sigma}_{t-1}=\max\left\{\symit{\Sigma}_{t}^1-\symit{\Sigma}_{t-1}, \symit{\Sigma}_{t}^2-\symit{\Sigma}_{t-1}\right\}\leq\max\left\{\symit{\Sigma}_{t}^1-\symit{\Sigma}_{t-1}^1, \symit{\Sigma}_{t}^2-\symit{\Sigma}_{t-1}^2\right\}$, so we get
\begin{equation*}
\begin{aligned}
\Delta_t
=\Big(\symit{\Sigma}_t-\max_{\tau\in 1:t-1}\symit{\Sigma}_{\tau}\Big)_+
&\leq(\symit{\Sigma}_t-\symit{\Sigma}_{t-1})_+ \\
&\leq \min\left\{ 3D G , 3\eta_t G^2 \right\} + G \left(\left\lVert x'_t-x'_{t+1}\right\rVert + \left\lVert y'_t-y'_{t+1}\right\rVert\right).
\end{aligned}
\end{equation*}
Let $\xi_t=\big(\Delta_t-G \big(\left\lVert x'_t-x'_{t+1}\right\rVert+\left\lVert v_t-v_{t+1}\right\rVert\big)\big)_+$, then $\xi_t\leq\min\left\{ 3D G , 3\eta_t G^2 \right\}$, and thus, 
\begin{equation*}
\begin{aligned}
\left(\sum_{t=1}^{T-1}\xi_t\right)^2
&=\sum_{t=1}^{T-1}\xi_t^2+2\sum_{t=1}^{T-1}\xi_t\sum_{\tau=1}^{t-2}\xi_\tau
\leq\sum_{t=1}^{T-1}\xi_t^2+2\sum_{t=1}^{T-1}\xi_t\sum_{\tau=1}^{t-2}\Delta_\tau \\
&=\sum_{t=1}^{T-1}\xi_t^2+2\sum_{t=1}^{T-1}\xi_t \left(\frac{L(2D+C)}{\eta_t}-\epsilon\right) 
\leq \sum_{t=1}^{T-1}9 D^2 G^2+\sum_{t=1}^{T-1} 6G^2L(2D+C),
\end{aligned}
\end{equation*}
where the last ``$\leq$'' uses the first and second terms in the minimum of the bound for $\xi_t$ in turn. 
Now we get 
\begin{equation*}
\begin{aligned}
\sum_{t=1}^{T}\Delta_t\leq O(C_T)+\sum_{t=1}^{T}\xi_t\leq O(C_T)+O\left(\sqrt{(1+C)T}\right)\leq O\left(\sqrt{(1+C)T}\right).
\end{aligned}
\end{equation*}
So \cref{eqpf:OPPM-bound2} holds. 
Substituting \cref{eqpf:OPPM-bound1,eqpf:OPPM-bound2} into \cref{eqpf:OPPM-bound-all} gives the following conclusion: 
\begin{equation*}
%\begin{aligned}
\regret^1_T, \regret^2_T 
\leq O\left(\min\bigg\{\sum_{t=1}^{T}\rho\left(f_{t}, f_{t-1}\right), \sqrt{(1+C)T}\bigg\}\right).
%\end{aligned}
\end{equation*}
\end{proof}
%%%%%%%%%%%%%%%%%%%%%%%%%%%%%%%%%%%%%%%%

\begin{lemma}
\label{cor:payoff-bound}
Assumptions~A1 and~A2 guarantee the finiteness of the instantaneous duality gap.
\end{lemma}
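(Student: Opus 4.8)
The goal is to show that under Assumptions~A1 and~A2, the quantity
\begin{equation*}
\max_{y\in Y} f_t(x_t,y)-\min_{x\in X}f_t(x,y_t)
\end{equation*}
is finite for each round $t$. The plan is to exploit compactness of $X$ and $Y$ together with the bounded-subgradient hypothesis to show that each payoff function is Lipschitz, hence bounded, on the feasible domain, so that the maxima and minima in question are attained and finite.

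First I would fix $t$ and an arbitrary reference pair, say $(x_t^*,y_t^*)\in X\times Y$ (its existence is guaranteed by the Kakutani-type argument already invoked after Assumption~A2, though any fixed feasible point works). For any $x\in X$ and $y\in Y$, convexity of $f_t(\cdot,y)$ together with $\lVert\partial_x f_t(x,y)\rVert\leq G_X$ gives
\begin{equation*}
|f_t(x,y)-f_t(x_t^*,y)|\leq G_X\lVert x-x_t^*\rVert\leq G_X D_X,
\end{equation*}
and similarly concavity of $f_t(x,\cdot)$ with $\lVert\partial_y(-f_t)(x,y)\rVert\leq G_Y$ gives $|f_t(x,y)-f_t(x,y_t^*)|\leq G_Y D_Y$. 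Chaining these two estimates through the intermediate point $(x_t^*,y)$ or $(x,y_t^*)$ bounds $|f_t(x,y)-f_t(x_t^*,y_t^*)|$ by $D_X G_X+D_Y G_Y$ uniformly over $X\times Y$. Hence $f_t$ is bounded on $X\times Y$; since $f_t$ is continuous and $X$, $Y$ are compact, $\max_{y\in Y} f_t(x_t,y)$ and $\min_{x\in X} f_t(x,y_t)$ are both attained and lie in a bounded interval, so their difference — the instantaneous duality gap — is finite. In fact this yields the explicit bound $2(D_X G_X+D_Y G_Y)$, which is exactly the constant used later in the proof of \cref{lem:OPPM}.

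The only subtlety, and the closest thing to an obstacle, is making sure the subgradient bound in Assumption~A2 is applied correctly at boundary points of $X$ and $Y$: a convex function on a compact convex set need not be differentiable, but it does admit subgradients on the relative interior, and the Lipschitz-from-bounded-subgradient argument extends to the whole set by continuity (or by noting that the subdifferential is nonempty wherever the function is finite and continuous, which holds here since $f_t$ is given to be continuous on all of $X\times Y$). I would state this carefully but briefly. Everything else is a routine triangle-inequality chaining of two one-variable Lipschitz estimates, so no further machinery is needed.
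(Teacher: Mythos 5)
Your proof is correct and follows essentially the same route as the paper: derive Lipschitz continuity in each variable from the bounded subgradients, then use the compactness of $X\times Y$ to conclude. The only cosmetic difference is that you chain estimates through a fixed reference point $(x_t^*,y_t^*)$, which costs a factor of $2$ and gives the bound $2(D_X G_X + D_Y G_Y)$, whereas the paper chains $f_t(x,y')\to f_t(x,y)\to f_t(x',y)$ directly and obtains the tighter $D_X G_X + D_Y G_Y$ for the instantaneous gap itself.
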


\begin{proof}[Proof of \cref{cor:payoff-bound}]
$\forall x, x'\in X$, $\forall y, y'\in Y$, 
\begin{equation*}
\begin{aligned}
\big\langle\partial_x f_t\left(x',y\right), x-x'\big\rangle
&\leq f_t(x,y)-f_t(x',y)
\leq \big\langle\partial_x f_t\left(x,y\right), x-x'\big\rangle, \\ 
\big\langle\partial_y (-f_t)\left(x,y'\right), y-y'\big\rangle
&\leq f_t(x,y')-f_t(x,y)\leq \big\langle\partial_y (-f_t)\left(x,y\right), y-y'\big\rangle,
\end{aligned}
\end{equation*}
which implies that $\left\lvert f_t(x,y)-f_t(x',y)\right\rvert\leq D_X G_X$, 
$\left\lvert f_t(x,y')-f_t(x,y)\right\rvert\leq D_Y G_Y$, and 
$\lvert f_t(x,y')-f_t(x',y)\rvert
\leq\left\lvert f_t(x,y')-f_t(x,y)\right\rvert+\left\lvert f_t(x,y)-f_t(x',y)\right\rvert
\leq D_X G_X+D_Y G_Y$. 
\end{proof}

\begin{lemma}
\label{cor:OPPM-stability}
Under Assumption~A2 and \cref{pro:2-Bregman-Lipschitz}, OPPM guarantees that $\left\lVert x_t-x_{t+1}\right\rVert\leq\eta_t G_X$ and $\left\lVert y_t-y_{t+1}\right\rVert\leq\gamma_t G_Y$. 
\end{lemma}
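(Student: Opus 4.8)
The plan is to exploit the first-order optimality condition of the OPPM update together with the $1$-strong convexity of the regularizers. Recall that the $x$-component of the OPPM update satisfies, for some $\nabla_x f_t(x_{t+1},y_{t+1})$ and all $x'\in X$,
\begin{equation*}
\big\langle \eta_t \nabla_x f_t(x_{t+1},y_{t+1})+x_{t+1}^\phi-x_{t}^\phi,\ x_{t+1}-x'\big\rangle\leq 0.
\end{equation*}
First I would specialize this inequality to the choice $x'=x_t$, which gives
\begin{equation*}
\big\langle x_t^\phi-x_{t+1}^\phi,\ x_{t+1}-x_t\big\rangle \geq \eta_t \big\langle \nabla_x f_t(x_{t+1},y_{t+1}),\ x_{t+1}-x_t\big\rangle.
\end{equation*}

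The left-hand side is bounded above using strong convexity: the Fenchel-coupling identity $B_\phi(x,y^\phi)+B_\phi(y,z)-B_\phi(x,z)=\langle z-y^\phi, x-y\rangle$ applied with $x=x_t$, $y=x_{t+1}$, $z=x_{t+1}^\phi$ (noting $B_\phi(x_{t+1},x_{t+1}^\phi)=0$) yields $\langle x_{t+1}^\phi - x_t^\phi, x_t - x_{t+1}\rangle = B_\phi(x_t,x_{t+1}^\phi)+B_\phi(x_{t+1},x_t^\phi)\geq \mu\lVert x_t-x_{t+1}\rVert^2 = \lVert x_t - x_{t+1}\rVert^2$ since $\phi$ is $1$-strongly convex. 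Hence $\langle x_t^\phi - x_{t+1}^\phi, x_{t+1}-x_t\rangle \geq \lVert x_t-x_{t+1}\rVert^2$. For the right-hand side, I would bound $\langle \nabla_x f_t(x_{t+1},y_{t+1}), x_{t+1}-x_t\rangle \geq -\lVert \nabla_x f_t(x_{t+1},y_{t+1})\rVert \cdot \lVert x_{t+1}-x_t\rVert \geq -G_X \lVert x_{t+1}-x_t\rVert$ by Cauchy--Schwarz and \cref{ass:2-subgradient-bounded}. Combining the two bounds gives $\lVert x_t-x_{t+1}\rVert^2 \leq \eta_t G_X \lVert x_t - x_{t+1}\rVert$, and dividing through by $\lVert x_t-x_{t+1}\rVert$ (the inequality being trivial when this vanishes) yields $\lVert x_t - x_{t+1}\rVert \leq \eta_t G_X$. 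The argument for $\lVert y_t - y_{t+1}\rVert \leq \gamma_t G_Y$ is symmetric, using the $y$-component of the optimality condition, the $1$-strong convexity of $\psi$, and $\lVert \partial_y(-f_t)\rVert \leq G_Y$.

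I do not expect a genuine obstacle here; the only point requiring a little care is the direction of the inequality in the first-order condition and the correct sign bookkeeping when passing from the optimality condition to the strong-convexity estimate — in particular making sure the inner product $\langle x_t^\phi - x_{t+1}^\phi, x_{t+1}-x_t\rangle$ is the one that strong convexity lower-bounds by $\lVert x_t-x_{t+1}\rVert^2$ rather than its negative. Also worth noting is that $x_{t+1}^\phi$ here should be read as the specific subgradient of $\phi$ at $x_{t+1}$ selected by the update (consistent with the paper's convention $x^\phi \in \partial\phi(x)$), so that the identity relating Fenchel couplings applies verbatim. Note that \cref{pro:2-Bregman-Lipschitz} is invoked only through the $1$-strong convexity of $\phi$ and $\psi$; the Lipschitz part of that property is not needed for this lemma.
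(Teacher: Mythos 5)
Your overall strategy — first-order optimality, strong convexity of $\phi$, then Cauchy--Schwarz — is the right one, and it is essentially an unpacked version of what the paper does (the paper works instead with the composite function $\symit{\Phi}=F_t(\,\cdot\,,y_{t+1})$, which is $\eta_t^{-1}$-strongly convex, and notes that $0$ is a subgradient at the minimizer $x_{t+1}$ while $\nabla_x f_t(x_t,y_{t+1})$ is a subgradient at $x_t$; this folds your two ingredients into one strong-convexity step). However, the sign bookkeeping you yourself flagged as the delicate point is in fact wrong, and the error does not cancel.

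Concretely: the quantity that $1$-strong convexity of $\phi$ lower-bounds by $\lVert x_t-x_{t+1}\rVert^2$ is
\begin{equation*}
\big\langle x_{t+1}^\phi-x_t^\phi,\ x_{t+1}-x_t\big\rangle \;=\; \big\langle x_t^\phi-x_{t+1}^\phi,\ x_t-x_{t+1}\big\rangle \;\geq\; \lVert x_t-x_{t+1}\rVert^2,
\end{equation*}
whereas the inequality you state, $\big\langle x_t^\phi-x_{t+1}^\phi,\ x_{t+1}-x_t\big\rangle \geq \lVert x_t-x_{t+1}\rVert^2$, asserts that a \emph{nonpositive} quantity (it equals minus the display above) dominates $\lVert x_t-x_{t+1}\rVert^2\geq 0$, which forces $x_t=x_{t+1}$. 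Relatedly, your instantiation of the three-point identity with $z=x_{t+1}^\phi$ makes both sides vanish identically (you then have $z-y^\phi=x_{t+1}^\phi-x_{t+1}^\phi=0$); you want $z=x_t^\phi$, which gives $B_\phi(x_t,x_{t+1}^\phi)+B_\phi(x_{t+1},x_t^\phi)=\langle x_t^\phi-x_{t+1}^\phi, x_t-x_{t+1}\rangle$. Finally, because of the first sign flip, your rearranged optimality condition and your strong-convexity estimate are both \emph{lower} bounds on the same inner product, so they do not chain; from the optimality condition at $x'=x_t$ you should extract the \emph{upper} bound $\langle x_{t+1}^\phi-x_t^\phi, x_{t+1}-x_t\rangle \leq -\eta_t\langle\nabla_x f_t(x_{t+1},y_{t+1}),x_{t+1}-x_t\rangle \leq \eta_t G_X\lVert x_{t+1}-x_t\rVert$, and then sandwich it against the strong-convexity lower bound. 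With those corrections the argument goes through and matches the lemma.
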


\begin{proof}[Proof of \cref{cor:OPPM-stability}]
Let $F_t(x,y)=f_t(x,y)+B_{\phi}\big(x, x_t^\phi\big)/\eta_t-B_{\psi}\big(y, y_t^\psi\big)/\gamma_t$. Let $\symit{\Phi}=F_t(\,\cdot\,, y_{t+1})$. 
Note that $\symit{\Phi}$ is $\eta_t^{-1}$-strongly convex, so we have that
\begin{equation*}
\begin{aligned}
B_{\symit{\Phi}}\big(x_{t+1}, x_{t}^{\symit{\Phi}}\big)\geq \frac{1}{2\eta_t}\left\lVert x_t-x_{t+1}\right\rVert^2, \qquad
B_{\symit{\Phi}}\big(x_{t}, x_{t+1}^{\symit{\Phi}}\big)\geq \frac{1}{2\eta_t}\left\lVert x_t-x_{t+1}\right\rVert^2.
\end{aligned}
\end{equation*}
Choosing $x_{t+1}^{\symit{\Phi}}=0$, and adding the above two inequalities yields
\begin{equation*}
\begin{aligned}
\frac{1}{\eta_t}\left\lVert x_t-x_{t+1}\right\rVert^2
\leq\left\langle x_{t}^{\symit{\Phi}}-x_{t+1}^{\symit{\Phi}}, x_{t}-x_{t+1}\right\rangle
\leq\left\lVert x_{t}^{\symit{\Phi}}\right\rVert\left\lVert x_t-x_{t+1}\right\rVert.
\end{aligned}
\end{equation*}
Note that $\left\lVert x_{t}^{\symit{\Phi}}\right\rVert=\left\lVert \nabla_x f_t(x_{t},y_{t+1}) \right\rVert\leq G_X$, Thus we have that $\left\lVert x_t-x_{t+1}\right\rVert\leq\eta_t G_X$. 
Likewise, $\left\lVert y_t-y_{t+1}\right\rVert\leq\gamma_t G_Y$. 
\end{proof}

\section{Proof of \cref{thm:OPPM}}

\begin{proof}%[Proof of \cref{thm:OPPM}]
Consider the \cref{alg:OPPM} initially at stage 0. 
Each time the value $C$ is doubled, the algorithm advances to the next stage. 
We use $C_n$ to represent the value of $C$ at stage $n$, and let $S_n$ be the set of indices for all rounds in the $n$-th stage. 
Assume that the game pauses after the $T$-th round is completed, at which point the OPPM is at stage $s$. 
Applying \cref{lem:OPPM} yields 
\begin{equation*}
\begin{aligned}
\regret^i_T &= \sum_{n=1}^s\regret^i_{S_n}
\leq \sum_{n=1}^s O\left(\min\bigg\{\sum_{t\in S_n}\rho\left(f_{t}, f_{t-1}\right), \sqrt{(1+2^n C_0)\left\lvert S_n\right\rvert}\bigg\}\right)  \\
&\leq O\left(\min\left\{\sum_{t=1}^T\rho\left(f_{t}, f_{t-1}\right), \sqrt{s+\sum_{n=1}^s 2^n C_0}\sqrt{\sum_{n=1}^s\left\lvert S_n\right\rvert}\right\}\right), \qquad\forall i = 1,2.
\end{aligned}
\end{equation*}
Note that $2^{s-1}C_0=C_{s-1}<C_T\leq C_s=2^s C_0$, and $T=\sum_{n=1}^s\left\lvert S_n\right\rvert$. We have that 
\begin{equation*}
\begin{aligned}
\regret^i_T \leq O\left(\min\biggl\{\sum_{t=1}^T\rho\left(f_{t}, f_{t-1}\right), \sqrt{\left(1+\log_2 C_T+4C_T\right)T}\biggr\}\right), \qquad\forall i = 1,2.
\end{aligned}
\end{equation*}
Applying \cref{lem:reg-dualgap,lem:reg-nereg} yields the conclusion to be proved. 
\end{proof}

\section{Proof of \cref{lem:OptOPPM}}
\label{app:Analysis-OptOPPM}

%%%%%%%%%%%%%%%%%%%%%%%%%%%%%%%%%%%%%%%%
\begin{proof}%[Proof of \cref{lem:OptOPPM}]
The first-order optimality condition of OptOPPM implies that 
\begin{equation*}
\begin{aligned}
&\exists\nabla_x h_t\big(x_t, y_t\big),&&\forall x'\in X,&&
\textstyle\bigl\langle \eta_t \nabla_x h_t\big(x_t, y_t\big) + x_t^{\phi}-\widetilde{x}_t^{\phi}, x_t-x'\bigr\rangle \leq 0, \\
&\exists\nabla_y (-h_t)\big(x_t, y_t\big),&&\forall y'\in Y,&&
\textstyle\bigl\langle \gamma_t \nabla_y (-h_t)\big(x_t, y_t\big) + y_t^{\psi}-\widetilde{y}_t^{\psi}, y_t-y'\bigr\rangle \leq 0, \\
&\exists\nabla_x f_t(\widetilde{x}_{t+1}, y_t),&&\forall x'\in X,&&
\bigl\langle \eta_t \nabla_x f_t(\widetilde{x}_{t+1}, y_t) + \widetilde{x}_{t+1}^{\phi}-\widetilde{x}_t^{\phi}, \widetilde{x}_{t+1}-x'\bigr\rangle \leq 0, \\
&\exists\nabla_y (-f_t)(x_{t},\widetilde{y}_{t+1}),&&\forall y'\in Y,&&
\textstyle\big\langle \gamma_t \nabla_y (-f_t)(x_{t},\widetilde{y}_{t+1})+\widetilde{y}_{t+1}^\psi-\widetilde{y}_{t}^\psi, \widetilde{y}_{t+1}-y'\big\rangle\leq 0.
\end{aligned}
\end{equation*}
Let's take Player~$1$ as an example.
We first perform identity transformation on the instantaneous individual regret: 
\begin{equation}
\label{eqpf:identical-trans}
\begin{aligned}
f_t(x_t, y_t)-f_t(x'_t, y_t)=\ &
\underbrace{f_t(x_t,y_t)-h_t(x_t,y_t)
+h_t(\widetilde{x}_{t+1},y_t)-f_t(\widetilde{x}_{t+1}, y_t)}_{\term{a}} \\
&+\underbrace{h_t(x_t,y_t)-h_t(\widetilde{x}_{t+1},y_t)
+f_t(\widetilde{x}_{t+1}, y_t)-f_t(x'_t, y_t)}_{\term{b}}.
\end{aligned}
\end{equation}
By using convexity and first-order optimality conditions, we get
\begin{equation}
\label{eq:1-order-x-bound}
\begin{aligned}
\termref{eqpf:identical-trans}{b}
\leq\ &\big\langle \nabla_x h_t\big(x_t, y_t\big), x_t-\widetilde{x}_{t+1}\big\rangle + \big\langle\nabla_x f_t(\widetilde{x}_{t+1}, y_t), \widetilde{x}_{t+1}-x'_t\big\rangle \\
\leq\ &\big\langle \widetilde{x}_{t}^\phi-x_{t}^\phi,x_t-\widetilde{x}_{t+1}\big\rangle/ \eta_t  + \big\langle \widetilde{x}_{t}^\phi-\widetilde{x}_{t+1}^\phi,\widetilde{x}_{t+1}-x'_t\big\rangle / \eta_t\\
=\ &\big[B_{\phi}\big(\widetilde{x}_{t+1}, \widetilde{x}_t^\phi\big)-B_{\phi}\big(\widetilde{x}_{t+1}, x_t^\phi\big)-B_{\phi}\big(x_t, \widetilde{x}_t^\phi\big)\big]/ \eta_t \\
&+\underbrace{\big[B_{\phi}\big(x'_t, \widetilde{x}_{t}^\phi\big)-B_{\phi}\big(x'_t, \widetilde{x}_{t+1}^\phi\big)\big] / \eta_t}_{\eqcolon\Phi_t}-B_{\phi}\big(\widetilde{x}_{t+1}, \widetilde{x}_{t}^\phi\big) / \eta_t.
\end{aligned}
\end{equation}
Let $\delta_t^1=\termref{eqpf:identical-trans}{a}-B_{\phi}\big(\widetilde{x}_{t+1}, x_t^\phi\big)/\eta_t$, so we have that $f_t(x_t, y_t)-f_t(x'_t, y_t)\leq\Phi_t+\delta_t^1$. 
Note that $\eta_t$ is non-increasing over time, $B_\phi$ is $L_\phi$-Lipschitz w.r.t. the first variable, and $L_\phi D_X$ is the supremum of $B_\phi$. Thus, 
\begin{equation*}
\begin{aligned}
\sum_{t=1}^{T}\Phi_t 
&\leq\frac{B_{\phi}\big(x'_{0}, \widetilde{x}_{1}^\phi\big)}{\eta_0} +\sum_{t=1}^{T}\frac{1}{\eta_t}\left(B_{\phi}\big(x'_t, \widetilde{x}_{t}^\phi\big)-B_{\phi}\big(x'_{t-1}, \widetilde{x}_{t}^\phi\big)\right) +\sum_{t=1}^{T}\left(\frac{1}{\eta_{t}}-\frac{1}{\eta_{t-1}}\right)B_{\phi}\big(x'_{t-1}, \widetilde{x}_{t}^\phi\big) \\
&\leq\frac{L_\phi D_X}{\eta_T}+\sum_{t=1}^{T}\frac{L_\phi}{\eta_t}\left\lVert x'_t-x'_{t-1}\right\rVert,
\end{aligned}
\end{equation*}
Now we get
\begin{equation*}
\begin{aligned}
\regret^1_T
\leq \frac{L_\phi D_X}{\eta_T}+\sum_{t=1}^{T}\frac{L_\phi}{\eta_t}\left\lVert x'_t-x'_{t-1}\right\rVert + \sum_{t=1}^{T} \delta_t^1.
\end{aligned}
\end{equation*}
Likewise, 
\begin{equation*}
\begin{aligned}
\regret^2_T
\leq \frac{L_\psi D_Y}{\gamma_T}+\sum_{t=1}^{T}\frac{L_\psi}{\gamma_t}\left\lVert y'_t-y'_{t-1}\right\rVert + \sum_{t=1}^{T} \delta_t^2,
\end{aligned}
\end{equation*}
where $\delta_t^2=f_t(x_t,\widetilde{y}_{t+1})-h_t(x_t,\widetilde{y}_{t+1})+h_t(x_t,y_t)-f_t(x_t,y_t)-B_{\psi}\big(\widetilde{y}_{t+1}, y_t^\psi\big)/\gamma_t$. 
Next, we verify $\delta_t^1, \delta_t^2\geq 0$. 
To verify $\delta_t^1\geq 0$, it suffices to combine the following two inequalities:
\begin{equation*}
\begin{aligned}
f_t(x_t,y_t)+B_{\phi}\big(x_t, \widetilde{x}_{t}^\phi\big)/\eta_t
&\geq f_t(\widetilde{x}_{t+1},y_t)+B_{\phi}\big(\widetilde{x}_{t+1}, \widetilde{x}_{t}^\phi\big)/\eta_t, \\
-h_t(x_t,y_t)+h_t(\widetilde{x}_{t+1},y_t)
&\geq-\big[B_{\phi}\big(\widetilde{x}_{t+1}, \widetilde{x}_t^\phi\big)-B_{\phi}\big(\widetilde{x}_{t+1}, x_t^\phi\big)-B_{\phi}\big(x_t, \widetilde{x}_t^\phi\big)\big]/ \eta_t.
\end{aligned}
\end{equation*}
The first inequality takes advantage of the optimality condition, and the second inequality is part of \cref{eq:1-order-x-bound}. 
Likewise, $\delta_t^2\geq 0$. 
So all learning rates are non-increasing. 
Let's go back to the focus on Player~1. 
The prescribed learning rate guarantees that
\begin{equation*}
\begin{aligned}
\regret^1_T
\leq \frac{L_\phi}{\eta_T}\left(D_X+C_T^1\right) + \sum_{t=1}^{T} \delta_t^1\leq\epsilon+2\sum_{t=1}^{T}\delta_t^1.
\end{aligned}
\end{equation*}
On the one hand, $\delta_t^1\leq 2\rho(f_t,h_t)$ causes
\begin{equation}
\label{eqpf:OptIOMDA-x-bound-part1}
\begin{aligned}
\regret^1_T\leq\epsilon+4\sum_{t=1}^{T}\rho(f_t,h_t).
\end{aligned}
\end{equation}
On the other hand, notice that 
\begin{equation}
\label{eqpf:gradient-bound}
\begin{aligned}
\delta_t^1
&\leq \big\langle \nabla_x f_t(x_t, y_t)-\nabla_x h_t(\widetilde{x}_{t+1},y_t), x_t-\widetilde{x}_{t+1}\big\rangle-B_{\phi}\big(\widetilde{x}_{t+1}, x_t^\phi\big)/\eta_t \\
&\leq 2G_X\lVert x_t-\widetilde{x}_{t+1}\rVert-B_{\phi}\big(\widetilde{x}_{t+1}, x_t^\phi\big)/\eta_t 
\leq \min\big\{2 D_X G_X, 2\eta_t G_X^2\big\},
\end{aligned}
\end{equation}
which implies that 
\begin{equation*}
\begin{aligned}
\left(\sum_{t=1}^{T}\delta_t^1\right)^2
&=\sum_{t=1}^{T}\left(\delta_t^1\right)^2+2\sum_{t=1}^{T}\delta_t^1\sum_{\tau=1}^{t-1}\delta_\tau^1 
=\sum_{t=1}^{T}\left(\delta_t^1\right)^2+2\sum_{t=1}^{T}\delta_t^1\bigg(\frac{L_\phi(D_X+C^1)}{\eta_t}-\epsilon\bigg) \\
&\leq \sum_{t=1}^{T}4G_X^2 D_X^2+\sum_{t=1}^{T}4G_X^2L_\phi(D_X+C^1).
%= 4G_X^2(4D_\phi^2+L_\phi P^u)T. 
\end{aligned}
\end{equation*}
This results in the following regret bound:
\begin{equation}
\label{eqpf:OptIOMDA-x-bound-part2}
\begin{aligned}
\regret^1_T\leq\epsilon+4G_X\sqrt{\left(D_X^2+L_\phi D_X+L_\phi C^1\right)T}.
\end{aligned}
\end{equation}
Combining \cref{eqpf:OptIOMDA-x-bound-part1,eqpf:OptIOMDA-x-bound-part2} yields
\begin{equation*}
\begin{aligned}
\regret^1_T\leq\epsilon+4\min\left\{\sum_{t=1}^{T}\rho(f_t,h_t), G_X\sqrt{\left(D_X^2+L_\phi D_X+L_\phi C^1\right)T}\right\}.
\end{aligned}
\end{equation*}
Likewise, the individual regret of Player~2 satisfies
\begin{equation*}
%\begin{aligned}
\regret^2_T\leq\epsilon+4\min\left\{\sum_{t=1}^{T}\rho(f_t,h_t), G_Y\sqrt{\left(D_Y^2+L_\psi D_Y+L_\psi C^2\right)T}\right\}. 
%\end{aligned}
\end{equation*}
\end{proof}

\section{Proof of \cref{thm:OptOPPM}}

%\begin{proof}[Proof of {\cref{thm:OptOPPM}}]
Refer to the proof of \cref{thm:OPPM}. 
%The first statement of \cref{thm:OptOPPM} can refer to the proof of \cref{thm:OPPM}. 
%Let's focus on the second statement. 
%If $f_t$ and $h_t$ are both bilinear functions, then \cref{eqpf:gradient-bound} can be replaced by $\delta^1_t\leq \left\lVert Y\right\rVert\rho\left( f_t,h_t\right)\lVert x_t-\widetilde{x}_{t+1}\rVert-B_{\phi}\big(\widetilde{x}_{t+1}, x_t^\phi\big)/\eta_t$, where $\left\lVert Y\right\rVert=\sup_{y\in Y}\left\lVert y\right\rVert$. 
%In short, $G_X$ in the round $t$ can be replaced by $\rho\left( f_t,h_t\right)$. 
%Therefore, it is straightforward to arrive at the conclusion to be proved. 
%\end{proof}

\section{Proof of \cref{thm:MP-OptOPPM}}

\begin{proof}%[Proof of {\cref{thm:MP-OptOPPM}}]
Note that in the proof of \cref{lem:OptOPPM}, the relaxed form inequalities $\delta_t^1, \delta_t^2\leq 2\rho(f_t,h_t)$ are employed. 
In fact, $L_t$ induces a tighter upper bound. 
\begin{equation*}
\begin{aligned}
\delta_t^1&\leq f_t(x_t,y_t)-h_t(x_t,y_t)+h_t(\widetilde{x}_{t+1},y_t)-f_t(\widetilde{x}_{t+1}, y_t) \\
&\leq \sum_{k=1}^\kappa \omega_t^k \Big(\big|f_t(x_t,y_t)-h_t^k(x_t,y_t)\big| + \big|f_t(\widetilde{x}_{t+1}, y_t)-h_t^k(\widetilde{x}_{t+1},y_t)\big|\Big)
\leq 2\left\langle L_t,\omega_t\right\rangle.
\end{aligned}
\end{equation*}
Likewise, $\delta_t^2\leq 2\left\langle L_t,\omega_t\right\rangle$. 
Now the benign bound $V'_T$ in \cref{thm:OptOPPM} can be substituted with $\sum_{t=1}^T \left\langle L_t,\omega_t\right\rangle$. 
Drawing from \cref{lem:hedge}, we have that 
\begin{equation*}
\begin{aligned}
\sum_{t=1}^T\left\langle L_t, \omega_t\right\rangle
&\leq \sum_{t=1}^T\left\langle L_t, 1_{k}\right\rangle+2\sqrt{(1+\ln T)L_{\infty}\sum_{t=1}^T\left\langle L_t, 1_{k}\right\rangle}+O\left(\ln T\right) \\
&=\sum_{t=1}^T\left\langle L_t, 1_{k}\right\rangle+O\left(\sqrt{\ln T}\right)\!\sqrt{\sum_{t=1}^T\left\langle L_t, 1_{k}\right\rangle}+O\left(\ln T\right) \\
&\leq 2\sum_{t=1}^T\left\langle L_t, 1_{k}\right\rangle+O\left(\ln T\right) 
\leq2\sum_{t=1}^{T}\rho\left(f_t, h_t^k\right)+O\left(\ln T\right),\quad \forall k\in 1\!:\!d,
\end{aligned}
\end{equation*}
where $1_{k}$ denotes the $d$-dimensional one-hot vector with the $k$-th element being 1.
Given the arbitrariness of $k$, we obtain
\begin{equation*}
\begin{aligned}
\sum_{t=1}^T\left\langle L_t, \omega_t\right\rangle
\leq 2\min_{k\in 1:d}\sum_{t=1}^{T}\rho\left(f_t, h_t^k\right)+O\left(\ln T\right),
\end{aligned}
\end{equation*}
In conclusion, the benign bound $V'_T$ in \cref{thm:OptOPPM} can be substituted with 
\[\min_{k\in 1:d}\sum_{t=1}^{T}\rho\left(f_t, h_t^k\right)+O\left(\ln T\right).\] 
Ignoring the dependence on poly-logarithmic factors yields the conclusion to be proved. 
\end{proof}

\begin{lemma}[Static Regret for Clipped Hedge, Static Version of Corollary~B.0.1 of \citet{Campolongo2021closer}]
\label{lem:hedge}
Assume that $L_t\geq 0$, $\max_{t\in 1:T}\lVert L_t\rVert_\infty=L_\infty$ and $\alpha=d/T<1$. 
If the learning rate follow from $\theta_t=(\ln T)/\big(\epsilon+\sum_{\tau=1}^{t-1}\sigma_{\tau}\big)$, 
%\begin{equation*}
%\begin{aligned}
%\theta_t=\frac{\ln T}{\epsilon+\sum_{\tau=1}^{t-1}\sigma_{\tau}},
%\end{aligned}
%\end{equation*}
where the constant $\epsilon > 0$ prevent $\theta_1$ from being infinite, and $\sigma_t=\langle L_t, \omega_t-\omega_{t+1}\rangle-\mathrm{KL}(\omega_{t+1}, \omega_{t})/\theta_{t}$, 
Then the clipped Hedge enjoys the following static regret: 
\begin{equation*}
\begin{aligned}
\regret_T u\leq 2\sqrt{(1+\ln T)L_{\infty}\sum_{t=1}^T\left\langle L_t, u\right\rangle}+O\left(\ln T\right), \quad \forall u\in\bigtriangleup_d^0.
\end{aligned}
\end{equation*}
\end{lemma}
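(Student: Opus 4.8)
The statement is the standard static-regret bound for clipped Hedge with a self-tuned step size, and the lemma is already flagged as the static specialization of Corollary~B.0.1 of \citet{Campolongo2021closer}; so one legitimate option is simply to quote that result. Below I sketch a self-contained ``AdaHedge-style'' argument instead. View the clipped update as entropic online mirror descent over $\bigtriangleup_d^\alpha$ with linear surrogate losses $L_t$, where $\regret_T u=\sum_{t=1}^T\langle L_t,\omega_t\rangle-\sum_{t=1}^T\langle L_t,u\rangle$. The first step is the per-round prox inequality: combining the first-order optimality condition defining $\omega_{t+1}$ with the three-point identity for $\mathrm{KL}$ gives, for every $u'\in\bigtriangleup_d^\alpha$,
\[
\langle L_t,\omega_t-u'\rangle\le\tfrac{1}{\theta_t}\bigl(\mathrm{KL}(u',\omega_t)-\mathrm{KL}(u',\omega_{t+1})\bigr)+\sigma_t,
\]
while comparing the objective values at $\omega_{t+1}$ and $\omega_t$ forces $\sigma_t\ge0$ (so the $\theta_t$ are well defined and non-increasing, the same optimality argument used to verify $\delta_t^1,\delta_t^2\ge0$ in \cref{lem:OptOPPM}). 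Summing over $t$ and Abel-summing the telescoping part, using that $1/\theta_t$ is non-decreasing and that the clipping floor $\omega_t^i\ge\alpha/d=1/T$ forces $\mathrm{KL}(u',\omega_t)\le\ln T$, bounds that part by $\ln T/\theta_T=\epsilon+\sum_{\tau=1}^{T-1}\sigma_\tau$; hence $\sum_{t=1}^T\langle L_t,\omega_t-u'\rangle\le\epsilon+2\sum_{t=1}^T\sigma_t$.

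Next I would un-clip the comparator: for $u\in\bigtriangleup_d^0$ put $u'=(1-\alpha)u+\tfrac{\alpha}{d}\mathbf{1}\in\bigtriangleup_d^\alpha$; since $L_t\ge0$ and $\alpha=d/T$, the substitution costs $\sum_{t=1}^T\langle L_t,u'-u\rangle\le\alpha\sum_t\lVert L_t\rVert_\infty\le\alpha TL_\infty=dL_\infty=O(1)$, so $\regret_T u\le2\sum_{t=1}^T\sigma_t+O(1)$. It then remains to turn the cumulative stability $\Delta_T=\sum_{t=1}^T\sigma_t$ into a first-order (``small-loss'') quantity. The key ingredient — the clipped specialization of Corollary~B.0.1 of \citet{Campolongo2021closer} — is the local estimate $\sigma_t\le\tfrac{\theta_t}{2}\mathrm{Var}_{\omega_t}(L_t)\le\tfrac{\theta_tL_\infty}{2}\langle L_t,\omega_t\rangle$ together with the crude $\sigma_t\le2L_\infty$, which I would obtain by relating $\sigma_t$ to the mixability gap of the \emph{unclipped} exponential-weights step (bounded via Hoeffding's lemma) and observing that the $\mathrm{KL}$-projection onto $\bigtriangleup_d^\alpha$ does not increase it.

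Finally I would close the recursion. Feeding $\theta_t=\ln T/(\epsilon+\Delta_{t-1})$ into the local estimate yields $\sigma_t(\epsilon+\Delta_{t-1})\le\tfrac12\ln T\,L_\infty\langle L_t,\omega_t\rangle$; telescoping $\Delta_t^2-\Delta_{t-1}^2=\sigma_t(\Delta_t+\Delta_{t-1})\le2\sigma_t\Delta_{t-1}+2\sigma_t^2$ and summing then gives $\Delta_T^2\le\ln T\,L_\infty\sum_t\langle L_t,\omega_t\rangle+4L_\infty\Delta_T$, hence $\Delta_T\le\sqrt{\ln T\,L_\infty\sum_t\langle L_t,\omega_t\rangle}+O(L_\infty)$. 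Substituting $\sum_t\langle L_t,\omega_t\rangle=\regret_T u+\sum_t\langle L_t,u\rangle\le2\Delta_T+\sum_t\langle L_t,u\rangle+O(1)$ produces a quadratic inequality in $\Delta_T$; solving it via $x^2\le ax+b\Rightarrow x\le a+\sqrt b$ isolates $\Delta_T\le\sqrt{(1+o(1))\ln T\,L_\infty\sum_t\langle L_t,u\rangle}+O(\ln T)$, whence $\regret_T u\le2\Delta_T+O(1)\le2\sqrt{(1+\ln T)L_\infty\sum_{t=1}^T\langle L_t,u\rangle}+O(\ln T)$. I expect the main obstacle to be exactly this bootstrapping step: the self-confident step size ties $\Delta_T$ to $\sum_t\langle L_t,\omega_t\rangle$, which in turn depends on $\Delta_T$ through the regret, so disentangling the two and recovering the clean leading constant~$2$ needs care; the clipping technicalities — keeping $\mathrm{KL}(u',\omega_t)\le\ln T$ and transporting the mixability-gap estimate past the projection — are the secondary place requiring attention.
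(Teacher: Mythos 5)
The paper itself does not prove this lemma: it is stated as an imported result (the static specialization of Corollary~B.0.1 of Campolongo and Orabona 2021) and is cited as such, then invoked as a black box in the proof of \cref{thm:MP-OptOPPM}. Your proposal therefore goes beyond what the paper does by sketching a self-contained AdaHedge-style derivation. That sketch is essentially sound and follows the standard route for such results: the prox inequality from the three-point identity plus $\sigma_t\geq 0$ from optimality, Abel summation together with $\mathrm{KL}(u',\omega_t)\leq\ln T$ (valid because clipping enforces $\omega_t^i\geq\alpha/d=1/T$), the $O(dL_\infty)$ de-clipping cost, and the small-loss recursion via a quadratic inequality in $\Delta_T$. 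Two small points deserve tightening if this were expanded. First, the intermediate claim $\sigma_t\leq\tfrac{\theta_t}{2}\mathrm{Var}_{\omega_t}(L_t)$ is not correct in general — the variance under the exponentially tilted measures $\mu_s$ entering the Taylor remainder can exceed $\mathrm{Var}_{\omega_t}(L_t)$ — but the chain should jump directly to the bound you actually use, namely $\sigma_t\leq\tfrac{\theta_t L_\infty}{2}\langle L_t,\omega_t\rangle$, which does hold because $\mathrm{Var}_{\mu_s}(L_t)\leq L_\infty\,\mathbb{E}_{\mu_s}[L_t]\leq L_\infty\langle L_t,\omega_t\rangle$ for all $s\in[0,\theta_t]$ (the tilted mean is non-increasing in $s$). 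Second, the crude bound $\sigma_t\leq 2L_\infty$ does not need Hoeffding: since the clipped $\sigma_t$ is dominated by the unclipped mixability gap $\langle L_t,\omega_t\rangle-m_t$ and $0\leq m_t\leq\langle L_t,\omega_t\rangle\leq L_\infty$ for $L_t\geq 0$, you get $\sigma_t\leq L_\infty$ outright. With those fixes, your bootstrapping step closes correctly and recovers the stated $2\sqrt{(1+\ln T)L_\infty\sum_t\langle L_t,u\rangle}+O(\ln T)$ bound.
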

%
%\begin{remark}
%\cref{lem:hedge} is the static version of Corollary~B.0.1 of \citet{Campolongo2021closer}. 
%One may obtain the desired static regret bound by setting the path length to null. 
%The clipping operation is designed to meet the requirements of adaptive learning rates. 
%\end{remark}
%

\section{Solver for Clipped Hedge}
\label{app:algo}
The clipped Hedge equivalent to the following update:
\begin{equation*}
\begin{aligned}
\omega_{t+1}=\arg\min_{\omega\in\bigtriangleup_d^\alpha}\left\langle\ln \frac{\omega}{\omega_{t}\cdot\exp(-\theta_{t}L_t)}, \omega\right\rangle,
\end{aligned}
\end{equation*}
Thus, an efficient solution is attainable by minor adjustments to the algorithm depicted in Figure~3 of \citet{herbster2001tracking}. The modified algorithm is elaborated in \cref{alg:negtive-entropy-optimization-solver}, wherein the primary alteration is the removal of the constraint $\lVert W\rVert_1=1$. 

%%%%%%%%%%%%%%%%%%%%%%%%%%%%%%%%%%%%%%%%
\begin{algorithm}[t]
\captionsetup{font=small}
\caption{Program to Solve $\displaystyle w^*=\arg\min\nolimits_{w\in \bigtriangleup_d^{\alpha}} \left\langle\ln (w/W), w\right\rangle$}
\label{alg:negtive-entropy-optimization-solver}
\algrenewcommand\algorithmicrequire{\textbf{Input:}}
\algrenewcommand\algorithmicensure{\textbf{Output:}}
\begin{algorithmic}[1]
\Require $W$, $\alpha$
\State $d\gets |W|$,~~ $I\gets 1\!:\!d$,~~ $C_{\#}\gets 0$,~~ $C_{\%}\gets 0$
\While{$I\neq\varnothing$ }
\State $w\gets$ the median of $W_I$
\State $L\gets \{i\mid i\in I, W_i<w\}$,~~ $M\gets \{i\mid i\in I, W_i=w\}$,~~ $H\gets \{i\mid i\in I, W_i>w\}$
\If{$\displaystyle w\frac{d-(C_{\#}+|L|)\alpha}{\lVert W\rVert_1-\big(C_{\%}+\lVert W_L\rVert_1\big)}<\alpha$ }
\State $C_{\#}\gets C_{\#}+|L|+|M|$,~~ $C_{\%}\gets C_{\%}+\lVert W_L\rVert_1+\lVert W_M\rVert_1$
\If{$H=\varnothing$ }
\State $w\gets\min\{W_i\mid i\in I, W_i>w\}$
\EndIf
\State $I\gets H$
\Else 
\State $I\gets L$
\EndIf
\EndWhile
\Ensure $\forall i\in 1\!:\!d$,~~ $\displaystyle w_i^*\gets\begin{cases}\begin{aligned}&\frac{\alpha}{d}&W_i<w\\&\frac{W_i}{d}\frac{d-C_{\#}\alpha}{\lVert W\rVert_1-C_{\%}}&W_i\geq w\end{aligned}\end{cases}$
\end{algorithmic}
\end{algorithm}
%%%%%%%%%%%%%%%%%%%%%%%%%%%%%%%%%%%%%%%%

\end{document}